\def\BibTeX{{\rm B\kern-.05em{\sc i\kern-.025emF b}\kern-.08em
T\kern-.1667em\lower.7ex\hbox{E}\kern-.125emX}}
\newtheorem{theorem}{Theorem}
\newtheorem{corollary}{Corollary}
\begin{document}

% \title{Zero-Shot Learning from Single Graphs for \\ Near-Shortest Path Routing via local search \\ in Wireless Networks}

\title{Learning from A Single Graph is All You Need for Near-Shortest Path Routing in Wireless Networks}

\author{\IEEEauthorblockN{Yung-Fu Chen, Sen Lin, and Anish Arora}
\IEEEauthorblockA{
\textit{Dept. of Computer Science and Engineering} \\
\textit{The Ohio State University}\\
Columbus, OH, USA \\
\{chen.6655, lin.4282, arora.9\}@osu.edu}
}

% \author{
% Yung-Fu Chen, Sen Lin, and Anish Arora\\
% \affaddr{\textit{Dept. of Computer Science and Engineering, The Ohio State University}, Columbus, OH, USA}\\
% \affaddr{chen.6655@osu.edu, lin.4282@osu.edu, anish@cse.ohio-state.edu}
% }

% \IEEEoverridecommandlockouts
% \IEEEpubid{\makebox[\columnwidth]{978-1-7281-7374-0/20/\$31.00~
% \copyright2020
% IEEE \hfill} \hspace{\columnsep}\makebox[\columnwidth]{ }} 
\maketitle
% \IEEEpubidadjcol

\begin{abstract}
We propose a learning algorithm for local routing policies that needs only a few data samples obtained from a single graph while generalizing to all random graphs in a standard model of wireless networks. We thus solve the all-pairs near-shortest path problem by training deep neural networks (DNNs) that efficiently and scalably learn routing policies that are local, i.e., they only consider node states and the states of neighboring nodes. Remarkably, one of these DNNs we train learns a policy that exactly matches the performance of greedy forwarding; another generally outperforms greedy forwarding. Our algorithm design exploits network domain knowledge in several ways: First, in the selection of input features and, second, in the selection of a ``seed graph'' and subsamples from its shortest paths. The leverage of domain knowledge provides theoretical explainability of why the seed graph and node subsampling suffice for learning that is efficient, scalable, and generalizable. 
%It also leads to learning one local policy that precisely matches the performance of greedy forwarding and another local policy that outperforms greedy forwarding. 
Simulation-based results on uniform random graphs with diverse sizes and densities empirically corroborate that using samples generated from a few routing paths in a modest-sized seed graph quickly learns a model that is generalizable across (almost) all random graphs in the wireless network model. 
%This paper proposes a learning algorithm for generalized local routing policies that needs only a few data samples obtained from a single graph while generalizing to all random graphs in a rich model of wireless networks. We thus solve the all-pairs near-shortest path problem by training deep neural networks (DNN) to efficiently and scalably learn routing policies that are local, i.e., they only consider node state and the states of neighboring nodes. To that end, we first design the input features for predicting action values and study the routing performance for the learning with and without exploiting domain knowledge in the input features. The study remarkably reveals that the DNNs without domain knowledge tend to learn a policy matching the performance of greedy forwarding, and the learning with domain knowledge outperforms greedy forwarding. Second, [[REWRITE THIS -- We also show from domain knowledge how to choose the seed graph and its subsamples for learning a model that generalizes to other graphs.]] we propose a heuristic [[REWRITE in terms of explanation?]] of sample selection to avoid data samples that cause overfitting. The results show that using the samples generated from a single routing path (or a few paths) in a graph quickly learn a model with comparable generalization performance. 
% In addition, a few-shot fine-tuning scheme is introduced further to improve the routing performance on a target graph. We observe a benefit with fine-tuning, especially when the routing performance is relatively low. 
\end{abstract}

\begin{IEEEkeywords}
wireless networks, local search, reinforcement learning, all-pairs shortest path routing, network knowledge
\end{IEEEkeywords}

\section{Introduction}

Routing protocol design for wireless networks is a fundamental problem that in spite of being deeply explored has room for improvement. This is largely because a one-size-fits-all solution is challenging given the scalability limits for network capacity \cite{xue2006scaling} as well as the complex network dynamics associated with (self and external) interference \cite{hekmat2004interference} and node mobility \cite{grossglauser2002mobility}.  Manually designed algorithms and heuristics often cater to particular network conditions and come with tradeoffs of complexity, scalability, and generalizability across diverse wireless networks. Human experts select appropriate routing policies for different types of networks or redesign them when the networks evolve.
%When the network changes, human experts need to redesign/refine the routing policies to adapt to target network, which prevents the current routing protocols from being applied in diverse network conditions.

%is extremely challenging due to the existence of network dynamics, such as device mobility and interference that introduce uncertainties on the network topology and connectivity. Traditional  approaches are mainly designed manually based on heuristics under particular network conditions, which inevitably suffer from the issues about complexity, scalability and generalizability. 

%To address the complexity issue, local search approaches are widely adopted in routing optimization, which can achieve reasonable performance compared to global search algorithms but with much lower time and space complexity. In routing problems, geographic routing makes routing nearly stateless by forwarding packets based on only the location information of candidate nodes and the destination. As such, the local search approaches are usually treated as the design principles for solving complicated routing problems. However,  due to the vague understanding of the achievable optimal performance with local search, most of these approaches that depend on manually designed strategies only show their feasibility but hardly convince that their performance is close to the optimum. 

Over the past decade or so, machine learning has increasingly attracted attention as an alternative data-driven albeit black-box basis for designing for wireless network routing solutions. It has for instance addressed the scalability issue for wireless network routing with large state spaces \cite{valadarsky2017learning}, with demonstrably superior performance compared to the state-of-the-art. Nevertheless, even with the use of deep neural networks (DNNs) for approximating optimal routing policies, current machine learning approaches suffer from relatively high computational complexity during training \cite{reis2019deep}. Similarly, the generalizability of the resulted routing policies over diverse network topologies and dynamics has received very little attention until recently.  Much of the evaluation of machine learned routing protocols has been conducted in small-scale and high-density networks, without demonstrating their generalizability (or scalability). Also, with increasing densification and concomitant use of resource constrained network devices, the need for learning solutions with low computational complexity has grown.

% Machine learning techniques have recently been applied to computational problems in wireless networking, which is much more challenging than the problems in wired networking due to the existence of network dynamics, such as device mobility and interference that introduce uncertainties on network topology and connectivity. In this area, routing, i.e., the network protocol design, in multi-hop wireless networks is the most fundamental problem that traditionally depends on manually designed strategies but recently attracted much attention for developing machine learning (data-driven) approaches to substitute the state-of-the-art \cite{valadarsky2017machine}.

%and also incorporate domain knowledge into the algorithm design is very critical for machine learning based approaches to be applied on resource limited devices.

% the application of machine learning in routing strategies still remains challenging. Specifically, by using DNNs to approximate routing policies, the consideration of reducing the computational complexity of training and incorporating the architecture design with domain knowledge is crucial to develop routing protocols without many computational limitations on devices and network setting assumptions. 

In this work, we focus attention on answering the following question: 
\begin{quote}
\noindent\emph{Can we design an efficient machine learning algorithm for routing based on local search that addresses complexity, scalability and generalizability issues all at once?}    
\end{quote}
\noindent 
We answer this question in the affirmative for the all-pairs near-shortest path (APNSP) problem, which serves as the foundation of many optimization problems (e.g., latency minimization and capacity maximization), for the space of unit-disk uniform random graphs. Our key insight is that ---in contrast to pure black-box approaches--- domain knowledge can be leveraged to theoretically guide the selection of ``seed'' graph(s) and corresponding sparse training data for efficiently learning models that generalize to all graphs in the chosen space. 

To motivate our focus on local search, we recall that approaches to solve the APNSP problem can be divided into two categories: global search and local search. Global search encodes the entire network state into graph embeddings \cite{narayanan2017graph2vec} and finds optimal paths, whereas local search needs only node embeddings \cite{grover2016node2vec} to predict the next forwarder on a shortest path. The model complexity (in time and space) resulting from the latter is inherently better than the former, as is the tolerance to network perturbations.  The latter can even achieve stateless design, as is illustrated by geographic routing \cite{cadger2012survey} where packet forwarding can be based on using only the location of the forwarding node and the destination. In other words, local search can achieve scalability and dynamic adaptation in a fashion that is relatively independent of the network configuration. Of course, local search comes with the penalty of sub-optimality of the chosen path relative to the optimal (i.e., shortest) path. While these assertions are true for not only manually designed solutions but also for machine learned solutions, machine learning being rooted in optimization offers the potential for outperforming manual designs that are based on heuristics.

\vspace*{1.5mm}
\noindent
\textbf{Approach.} We model the APNSP problem as a Markov decision process (MDP) and propose a DNN-based approach to learn a single-copy routing policy that only considers the states from the node and its neighbor nodes. Towards achieving efficient learning that generalizes over a rich class of graphs, we develop a little theory based on the similarity between the local ranking of node neighbors and their corresponding path length metric. If local input features can be chosen to thus achieve high similarity for most nodes in almost all  graphs in the chosen space, the APNSP objective may be realized with high probability by training a DNN that characterizes a local metric of each neighbor as a potential forwarder; the best ranked neighbor is chosen as the single-copy forwarder. The theory guides our selection of input features as well as corresponding training data and is corroborated by empirical validation of our learned routing solutions. 

%To increase the generalizability of the learned policy, the design of input features for the DNNs is independent of network topology. Specifically, given the input features that are associated with the state of one potential forwarder, we train the DNN to accurately characterize a routing metric of the forwarder, so that each node can make a routing decision by choosing the forwarder with the best metric value. 

Our approach yields a light-weight solution to generalizable routing in wireless networks in the sense that (a) the routing policy is rapidly learned from a small dataset that can be easily collected from a single ``seed'' graph; (b) the learned single-copy routing policy can be used on all nodes of a graph, and is able to generalize across diverse network sizes and densities without additional training on the target networks; and (c) the routing decision only depends on the local network state, i.e., the information of the node and its one-hop communication neighbor nodes. 

\vspace*{1.5mm}
\noindent
\textbf{Contributions.} Building on our design of a learning algorithm for generalizable routing based on local search, this paper presents the following findings:
\begin{enumerate}

\item \emph{Generalization from single graph learning is feasible and explainable for APNSP.} 

Our theory shows that with appropriate input feature design and selection of seed graphs,  a near-optimal routing policy that is generalizable across graphs can be learned from a single graph. 
The choice of a seed graph is biased towards those graphs whose nodes have the highest similarity of local ranks with respect to corresponding path metrics.

\item  \emph{Using domain knowledge for selecting input features and training samples helps increase the training efficiency.} 

We propose a knowledge-guided selection mechanism that carefully selects a seed graph as well as its data samples  for training. If input features have the desired similarity, only a few data samples, collected from nodes on a carefully chosen path in the seed graph, are sufficient for learning that guarantees the generalization performance over almost all uniform random graphs, while significantly reducing the training complexity and sample complexity. In particular, the number of required data samples for training is only in the order of $O(kd)$, where $k$ and $d$ denote the number of chosen nodes for generating samples and the average node degree, respectively. 

\item \emph{Learning from a single graph using only a distance metric matches the well-known greedy forwarding routing.} 

We first select the 
%state and action features, i.e., 
input features of the DNN based on only a distance metric. By training the DNN model with data samples collected from a single graph, we show that the performance of the learned routing policy exactly matches the performance of greedy forwarding over the uniform random graphs.

\item \emph{Learning from a single graph using not only a distance metric but also a node stretch factor relative to a given origin-destination node pair achieves even better generalized APNSP routing.} 
%One recent study \cite{chen2023qf} shows that the searching of shortest paths in uniform random graphs can be bounded within an elliptic search region with high probability. 

The domain knowledge guides the selection of a richer set of input features and corresponding choice of subsamples from a carefully selected seed graph. 
%We model this domain knowledge as a stretch factor of a node for a given origin-destination pair and incorporate it into the input feature design. Given the optimal $Q$-values that characterize the shortest path from a carefully chosen seed graph, we show that supervised learning from this single graph can 
Based on this,  a better routing policy can be learned that both achieves zero-shot generalization for APNSP and outperforms greedy forwarding over almost all uniform random graphs.

\item \emph{Reinforcement learning from a single graph  achieves comparable generalization performance for ASNSP.} 

The efficiency, scalability, and generalizability of our learning results is not limited to supervised learning. We illustrate this by developing a reinforcement learning (RL) scheme for routing without knowing the optimal $Q$-values.  By conducting experiments on random graphs across different sizes and densities, we show that learning from the seed graph(s) achieves generalization performance comparable to the supervised learning approach.

%\item A knowledge-guided few-shot fine-tuning scheme is introduced further to improve the routing performance of a given model on a target graph. 
\end{enumerate}

\iffalse
{\em Outline}.~The rest of this paper is organized as follows. In Section II,
In Section III
...in Section IV
In Section V
Finally, we discuss future work and conclusions in Section VI.
\fi

\section{Related Work}

\noindent
\textbf{Feature Selection for Routing}.~
A classic feature for local routing comes from greedy forwarding \cite{finn1987routing}, where the distance to the destination node (in a euclidean or hyperbolic metric space) is used to optimize forwarder selection. It has been proven that this feature achieves nearly optimal routing in diverse network configurations, including scale-free networks \cite{kleinberg2000navigation, papadopoulos2010greedy}. A stretch bound on routing paths using greedy forwarding is investigated in diverse models with or without the assumption of unit disk graphs \cite{flury2009greedy,  tan2009visibility, tan2011greedy, tan2009convex, won2014low}. 
%in unit disk graphs, is terms of the average node degree and the optimal length \cite{flury2009greedy}. 
Other features for forwarder selection include Most Forward within Radius (MFR) \cite{takagi1984optimal}, Nearest with Forwarding Progress (NFP) \cite{hou1986transmission}, the minimum angle between neighbor and destination (aka Compass Routing) \cite{kranakis1999compass}, and Random Progress Forwarding (RPF) \cite{nelson1984spatial}. 

Network domain knowledge has also been used to guide search efficiency in routing protocols. A recent study \cite{chen2023qf} shows that searching for shortest paths in uniform random graphs can be restricted to an elliptic search region with high probability. A Stretch Factor at each node is used as an input feature used by its geographic routing protocol, {\it QF-Geo}, to determine whether a node's neighbors lie in the search region or not and to forward packets within a predicted elliptic region.

\vspace*{1mm}
\noindent
\textbf{Generalizability of Machine Learned Routing}.~Only recently has machine learning research started to address generalizability in routing contexts. For instance, generalizability to multiple graph layout distributions, using knowledge distillation, has been studied for a capacitated vehicle routing problem \cite{DBLP:conf/nips/Bi0WCCSC22}. Some of these explorations have considered local search. For instance, wireless network routing strategies via local search based on deep reinforcement learning \cite{manfredi2021relational,  manfredi2022learning} have been shown to generalize to other networks of up to 100 nodes, in the presence of diverse dynamics including node mobility, traffic pattern, congestion, and network connectivity. Deep learning has also been leveraged for selecting an edge set for a well-known heuristic, Lin-Kernighan-Helsgaun (LKH), to solve the Traveling Salesman Problem (TSP) \cite{DBLP:conf/nips/XinSCZ21}.  The learned model generalizes well for larger (albeit still modest) sized graphs and is useful for other network problems, including routing. Likewise, graph neural networks and learning for guided local search to select relevant edges have been shown to yield improved solutions to the TSP \cite{GNN-GLS-TSP}. In related work, deep reinforcement learning has been used to iteratively guide the selection of the next solution for routing problems based on neighborhood search \cite{wu2021learning}.

\section{Problem Formulation for Generalized Routing}
We assume that a wireless network is represented by a unit disk graph (UDG) $G = (V, E)$ whose nodes are uniformly randomly distributed over a 2-dimensional Euclidean plane. Each node $v \in V$ is associated with a unique ID and knows its global coordinates. For any nodes $v, u \in V$, edge $(v, u) \in E$ if and only if the Euclidean distance between $v$ and $u$ is not larger than the communication radius $R$. The weight function $w(v, u) \leq R$ denotes the length of edge $(v, u)$. Let $\rho$ denote the network density, where network density is defined to be the average number of nodes per $R^2$ area, and $n$ the number of nodes in $V$. It follows that all nodes in $V$ are distributed in a square whose side is of length $\sqrt{\frac{n \times R^2}{\rho}}$.

\vspace*{1mm}
Typically, in the all-pairs shortest path routing problem in $G$, the objective is to determine a routing policy $\pi(O, D, v) = u$ that finds $v$'s next forwarder $u$ for any origin and destination pair $(O, D)$, such that the total length of the corresponding routing path, $p = [v_0, ..., v_L], v_0=O, v_L=D$, is minimized:
\begin{equation*}
\begin{array}{l}
\min \ \sum_{i=0...L-1 \wedge v_i \in p}{w(v_i, v_{i+1})}.
\label{}
\end{array}
\end{equation*} 

\subsection{All-Pairs Near-Shortest Path Problem}

This paper aims to solve the all-pairs near-shortest path (APNSP) routing problem for any uniform random graph. Here, we define the APNSP problem objective as finding a near-shortest path whose length is within a user-specified factor ($\geq 1$) of the shortest path length. %A solution of APNSP implies the existence of a routing path, other than the shortest path, that also satisfies a user-specific constraint. 

Formally, let $d_e(O, D)$ denote the Euclidean distance between two endpoints $O$ and $D$, and $d_{sp}(O, D)$ denote the length of the shortest path between these endpoints. Further, let $\zeta(O, D)$ denote the path stretch of the endpoints, i.e., the ratio $\frac{d_{sp}(O, D)}{d_{e}(O, D)}$.

\vspace*{1.5mm}
\textbf{The APNSP Problem.} Given a connected graph $G = (V, E)$ and any source-destination pair $(O, D)$ where $O, D \in V $, determine a routing policy $\pi(O, D, v) \! = \! u$ that finds $v$'s next forwarder $u$ and the corresponding routing path $p(O, D)$ with path length $d_{p}(O, D)$, such that with high probability the accuracy of the found path is optimized as follows:
\begin{align}
% \begin{array}{l}
\max& ~~ Accuracy_{G, \pi} = \frac{\sum_{O, D \in V}{\eta(O, D)}}{|V|^2},
\\
s.t. & ~~
\eta(O, D) = \begin{cases} 1, \ \ i\!f \ \frac{d_{p}(O, D)}{d_{sp}(O, D)} \leq  \zeta(O, D) (1+\epsilon) \\
0, \ \ otherwise
\end{cases}
\label{APNSP_Accuracy}
% \end{array}
\end{align} 

\begin{figure}[thb!]
\vspace{-2mm}
    \centering
     \subfigure
     {
        \includegraphics[width=0.48\textwidth]{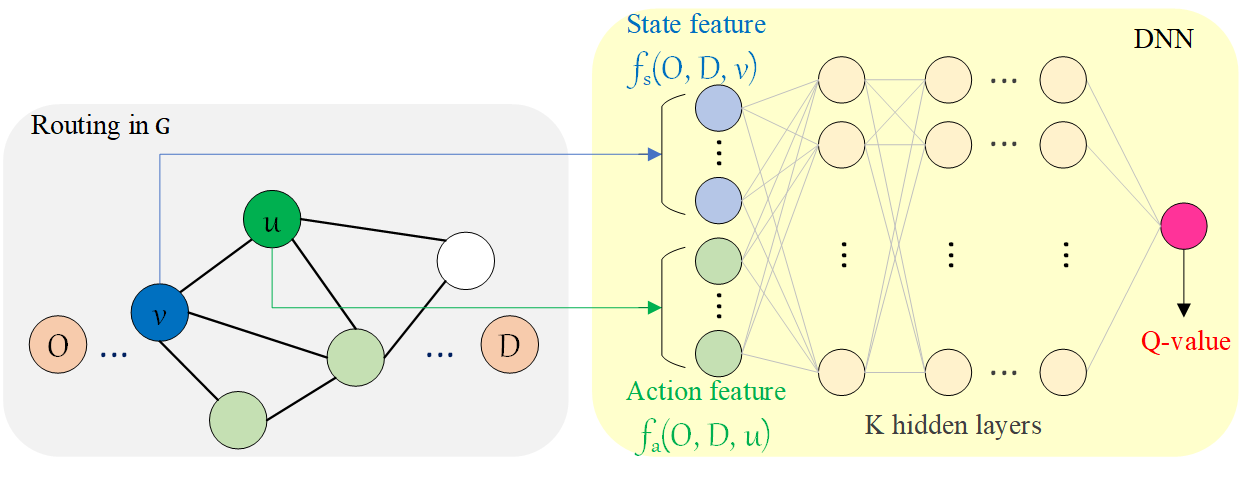}
    }
    \caption{Schema for solution using DNN to predict $Q$-values for selecting the routing forwarder.}
    \label{DNN}
    \vspace{-2mm}
\end{figure}

In other words, the user-specified factor for APNSP is $\zeta(O, D) (1+\epsilon)$, where $\epsilon\geq 0$.

\subsection{MDP Formulation for the APNSP Problem}
To solve the APNSP problem, we first formulate it as a Markov decision process (MDP) problem which learns to choose actions that maximize the expected future reward. In general, an MDP consists of a set of states $S$, a set of actions  $A(s)$ for any state $s\in S$, an instantaneous reward $r(s, a)$, indicating the immediate reward for taking action $a\in A(s)$ at state $s$, and a state transition function $P(s'|s, a)$ which characterizes the probability that the system transits to the next state $s'\in S$ after taking action $a$ at the current state $s\in S$. To simply the routing behavior in the problem, the state transition is assumed to be deterministic. Specifically, each state $s$ represents the features of a node holding a packet associated with an origin-destination pair $(O, D)$, and an action $a\in A(s)$ indicates the routing behavior to forward the packet from the node to one of its neighbors. Given the current state $s$ and an action $a\in A(s)$ which selects one neighbor as the next forwarder, the next state $s'$ is determined as the features of the selected neighbor such that the probability $P(s'|s, a)$  is always one. The tuple $(s, a, r, s')$ is observed whenever a packet is forwarded.

In addition, we define the $Q$-value to specify the cumulative future reward from state $s$ and taking action $a$: 
\vspace*{-1mm}
\begin{align*}
    Q(s_t=s, a_t=a) = \sum_{i=t}^L \gamma^{i-t} r(s_i, a_i)
\vspace*{-1mm}
\end{align*}
% The $Q$-value function is then defined as follows:
% \begin{equation*}
% \begin{array}{l}
% Q(s, a) = r + \gamma\max{Q(s', a')}  
% \label{}
% \end{array}
% \end{equation*} 
where $\gamma, 0 \! \leq \! \gamma \! \leq \! 1,$ is the discount factor. When $\gamma=0$, the instantaneous reward is considered exclusively, whereas the future rewards are treated as equally important as the instantaneous reward in the $Q$-value if  $\gamma=1$. In the APNSP problem, we define the instantaneous reward $r(s, a)$ as the negative length of the corresponding edge $(s, s')$, and set $\gamma=1$. Therefore, the optimal $Q$-value $Q^*(s, a)$ is equal to the cumulative negative length of the shortest path from $s$ to the destination.

To solve the APNSP problem, we seek to learn the optimal $Q$-value through a data-driven approach with DNNs. As depicted in Figure~\ref{DNN}, each state $s$ and action $a$ will be embedded into a set of input features denoted by $f_{s}(s)$ and $f_{a}(a)$, respectively. A DNN will be learned to approximate the optimal $Q$-value given the input features, based on which a near-shortest path routing policy can be obtained by taking actions with largest $Q$-values.  

\section{Explainable Generalizability of \\ Routing Policies}
In this section, we present a sufficient condition for how a routing policy, $\pi$, learned from a seed graph $G^{*}=(V^{*}, E^{*}) \in \mathbb{G}$, where $\mathbb{G}$ is the set of all uniform random graphs, with select samples generated from a subset of nodes in $V^{*}$, can generalize over other (and potentially all) uniform random graphs $G \! \in \! \mathbb{G}$.

A basis for generalizability in this setting is the concept of a ranking metric for each node $v \! \in \! V$ with respect to each $u \! \in \! nbr(v)$, where $nbr(v)$ denotes the set of $v$'s one-hop neighbors. Let $f_{s} \! : \! V \! \rightarrow \! \mathbb{R}^{I}$ be a map of $v \in V$ to its $I$ state features and let $f_{a} \! : \! V \! \rightarrow \! \mathbb{R}^{J}$ be a map of $u \! \in \! nbr(v)$ to its $J$ action features. We define a ranking metric $m(f_{s}(v),f_{a}(u)) \! \in \! \mathbb{R}$ to be a {\em linear} function over the input features associated with $v$ and $u$. For notational convenience, given an ordering $\langle u_0, ..., u_d \rangle$ of all nodes in $nbr(v)$, let $X_{v} = \{(f_{s}(v),f_{a}(u_0)), ..., (f_{s}(v),f_{a}(u_d))\}$, denote the set of input vectors for each corresponding neighbor $u_k \! \in \! nbr(v), 0 \! \leq \! k \! \leq d$. Also, let $Y_{v} = \{Q(v, u_0), ..., Q(v,u_d)\}$ denote the corresponding set of $Q$-values. 
%For notational convenience, let $X_{v} = \{m(f_{s}(v),f_{a}(u_0)), ..., m(f_{s}(v),f_{a}(u_d))\}, u_0, ..., u_d \in nbr(v)$ denote the set of measures for all of $v$'s neighbors. Also, let $Y_{v} = \{Q(v, u_0), ..., Q(v, u_d)\}$ be the corresponding set of $Q$-values. 

%%%YUNG-FU:  Let's discuss the next statement since it is not clear. Also note that the reader has not seen the figure for the solution yet, so some terms like action features are not clear.

%For each state represented by a node $v$ forwarding a packet, the set of routing actions is collected from all its neighbors $u \in nbr(v)$, where $nbr(v)$ denotes the set of $v$'s one-hop neighbors. 

%of action features at state $v$, and $X_{v} = \{m(u_0), ..., m(u_d)\}, u_0, ..., u_d \in nbr(v)$ denote the set of measures collected from all $v$'s neighbors. 

%We let $m(u)$ denote the measure\footnote{The measure, $m(u) \in \mathbb{R}$, is defined to be the output of a linear or non-linear function with input of all $c$ action features $[a_0(u), ... a_c(u)]$.} of action features at state $v$, and 

We begin with a sufficient condition on the relation between the ranking metric $m$ and the corresponding $Q$-values set $Y_{v}$ to learn a DNN model for ranking the neighbors $u \in nbr(v)$ according to their $Q$-values. 

%Let $RK^{+}$ and $RK^{-}$ be the ranking function sorting elements in a list with an ascending and descending order. Then $RK^{+}(X_{v})$ and $RK^{-}(Y_{v})$ is the ranking sequence of $v$'s measure list and $Q$-value list from $u$. 

\begin{corollary}
Let $v$ be any node in $V$ for which ranking metric $m(f_{s}(v), f_{a}(u))$ satisfies the following property, {\bf RankPres}: 
\begin {quote}
If $\langle m(f_{s}(v), f_{a}(u_0)), \, ... \, , \, m(f_{s}(v), f_{a}(u_d ) \rangle$ is monotonically increasing then $\langle Q(v, u_0), ..., Q(v, u_d)\rangle$ is monotonically increasing. \end{quote}
There exists a learnable DNN $H$, $H: \! \mathbb{R}^{I+J} \! \rightarrow \! \mathbb{R}$, with training samples $\langle X_{v}, Y_{v} \rangle$, that achieves optimal ranking of all $u \in nbr(v)$, i.e., its output for the corresponding neighbors of $v$, $\langle\, H(f_{s}(v), f_{a}(u_{0})), ..., H(f_{s}(v), f_{a}(u_{d})) \, \rangle$, is monotonically increasing.
\label{Learnability}
\end{corollary}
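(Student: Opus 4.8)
The plan is to exploit the two structural ingredients the statement hands us: the \emph{linearity} of the ranking metric $m$ and the order-preservation guaranteed by \textbf{RankPres}. The target is to exhibit a single DNN $H$ whose scalar outputs on the inputs $(f_{s}(v), f_{a}(u_k))$ reproduce the correct ranking of $v$'s neighbors, so I would construct $H$ as a network that represents (or approximates arbitrarily well) the linear metric $m$ itself, and then let RankPres transfer the correct ordering from $m$ onto the $Q$-values.

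First I would write $m$ explicitly. Since $m(f_{s}(v), f_{a}(u))$ is by assumption linear in its $I+J$ arguments, there is a weight vector $w \in \mathbb{R}^{I+J}$ and a bias $b$ with
\begin{equation*}
m(f_{s}(v), f_{a}(u)) = w^{\top}\!\begin{bmatrix} f_{s}(v) \\ f_{a}(u) \end{bmatrix} + b .
\end{equation*}
Next I would observe that a DNN can realize this affine map exactly: a single affine readout, or more generally a multilayer network whose activations are driven into their linear regime, computes $H(x) = w^{\top} x + b$. Hence $H \equiv m$ is an admissible point of the DNN hypothesis class, and the output sequence $\langle H(f_{s}(v), f_{a}(u_0)), \dots, H(f_{s}(v), f_{a}(u_d)) \rangle$ coincides with the $m$-sequence evaluated at the neighbors.

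The ordering step is where RankPres does the work. I would order the neighbors $u_0, \dots, u_d$ so that the $m$-sequence is monotonically increasing; for $H \equiv m$ this makes the $H$-sequence monotonically increasing by construction. RankPres then forces $\langle Q(v, u_0), \dots, Q(v, u_d)\rangle$ to be monotonically increasing as well, so the ranking induced by $H$ agrees with the ranking induced by the optimal $Q$-values. This is exactly the claimed optimal ranking, and choosing the $\arg\max$ neighbor under $H$ selects the $\arg\max$ under $Q$.

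Finally I would argue \emph{learnability} from the samples $\langle X_{v}, Y_{v} \rangle$: because an order-equivalent function (namely $m$) already lies in the hypothesis class and attains zero ranking error, a training objective that penalizes rank violations is minimized by such an $H$. The step I expect to be the main obstacle is precisely this last one: $m$ only preserves the \emph{rank} of the $Q$-values and need not fit their magnitudes, so I must be careful that the appeal to learnability rewards correct ordering rather than exact value reconstruction. Otherwise a naive argument would lean on an expressive network merely interpolating the finite $Q$-targets per node, which would establish the ranking trivially but would fail to expose the generalizing role of the linear metric $m$ that RankPres is designed to capture.
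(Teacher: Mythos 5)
Your proposal is correct and takes essentially the same route as the paper: the paper's proof likewise observes that since $m$ is linear, a single-neuron DNN realizes $m$ exactly by adopting its feature weights, and RankPres then transfers the induced ordering to the $Q$-values. Your added caution about the learnability step --- that training must reward correct ordering rather than exact $Q$-value reconstruction, since $m$ is only order-equivalent to $Q$ --- is a legitimate refinement that the paper's one-line proof does not address.
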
    
\begin{proof}
Since the ranking metric $m$ is a linear function, a single neuron DNN $H$ can achieve the optimal ranking by weighing each input feature with the weight corresponding to that feature in $m$.
%Let $m()$ be a linear function and the corresponding coefficient for the $i$ state features ($x_{0}, ..., x_{i-1}$) and $j$ action features ($x_{i}, ..., x_{i+j-1}$) are $c_{0}, c_{1}, ... c_{i+j-1}$. Thus, $m(f_{s}(v),f_{a}(u)) = c_{0}x_{0} + ...+ c_{i+j-1}x_{i+j-1}$
%We can simply construct the DNN $H$ with one hidden layer and a single neuron that has weight, $c_{0}, c_{1}, ... c_{i+j-1}$, respectively assigned to the input feature $x_{0}, ..., x_{i+j-1}$. The function $H(f_{s}(v), f_{a}(u)) = m(f_{s}(v), f_{a}(u))$ is thus learnable as a linear function whose output preserves the monotonicity for $X_{v}$.
\end{proof}

%We note that the weight of $v$ does not affect the routing decision at node $v$ because $v$ is invariant to all its corresponding $Q$-values $Q(v, u_0), ..., Q(v, u_d)$.

Next, we lift the sufficient condition to provide a general basis, first, for ranking the neighbors of all nodes in a graph according to their optimal shortest paths, from only the samples derived from one (or a few) of its nodes; and second, for similarly ranking the neighbors of all graphs in $U$.

\begin{theorem}
[Cross-Node Generalizability] 
%For any graph $G$, $G \! = \! (V,E)$, if for all $v \! \in \! V$ there exists a  sequence of $u \! \in \! nbr(v)$, $\langle u_0, ..., u_d \rangle$, and a measure function $m(f_{s}(v), f_{a}(u))$ such that $[m(f_{s}(v), f_{a}(u_0)), ..., m(f_{s}(v), f_{a}(u_d))]$ and the corresponding optimal $Q$-value ordering, $[Q^{*}(v, u_0), ..., Q^{*}(v, u_d)]$, are respectively monotonically increasing, then an optimal routing policy for all $v \! \in \! V$ is learnable with only the subset of training samples $\langle X, Y \rangle$, where $X \! = \! \bigcup_{v \in V' \subseteq V \wedge u \in nbr(v)} \ {X_{v}}$ and $Y \! = \!  \bigcup_{v \in V' \subseteq V \wedge u \in nbr(v)} {[Q^{*}(v, u_0), ..., Q^{*}(v, u_d)]}$.
For any graph $G$, $G \! = \! (V,E)$, if there exists a ranking metric $m(f_{s}(v), f_{a}(u))$ that satisfies the RankPres property for all $v \! \in \! V$, then an optimal ranking policy for all $v \! \in \! V$ is learnable with only a subset of training samples $\langle X_{V'}, Y_{V'} \rangle$, where $V' \subseteq V$, $X_{V'} \! = \! \bigcup_{v \in V' }{X_{v}}$,
and $Y_{V'} \! = \!  \bigcup_{v \in V'}{Y_{v}}$.
\label{OPT_Routing}
\end{theorem}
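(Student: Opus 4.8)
The plan is to reduce the seemingly global task of ranking every node's neighbors to the purely finite-dimensional task of identifying a single, node-independent linear function. The key observation is that the ranking metric $m$ in the hypothesis is one fixed linear map of the features $(f_{s}(v), f_{a}(u))$; its coefficients do not depend on the identity of $v$. Consequently, the single-neuron DNN produced in Corollary~\ref{Learnability} --- the neuron whose weights are exactly the coefficients of $m$ --- is the \emph{same} network for every node. First I would invoke Corollary~\ref{Learnability} pointwise: because RankPres holds for \emph{every} $v \in V$, this one neuron $H = m$ outputs a sequence that is monotone in the same order as $\langle Q(v,u_0),\ldots,Q(v,u_d)\rangle$ at each $v$ simultaneously. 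Thus an $H$ that optimally ranks all of $V$ provably exists, and it is a single linear function.

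Second, I would argue that this universal $H$ is recoverable from the samples of a strict subset $V' \subseteq V$. Since $H = m$ is linear, it is determined by its $I+J$ weights together with a bias, i.e., it lives in a fixed low-dimensional parameter space regardless of $|V|$. The training set $\langle X_{V'}, Y_{V'}\rangle$ imposes, for each $v \in V'$, the constraint that $H$ reproduce the neighbor ordering encoded in $Y_v$; by RankPres these constraints are mutually consistent, since $m$ itself satisfies them on $V' \subseteq V$. Hence, provided $V'$ contributes enough informative samples --- concretely, enough that the aggregated input vectors in $X_{V'}$ pin down the direction of $m$'s weight vector up to positive scaling (which leaves the induced ranking unchanged) --- any linear model fit to $\langle X_{V'}, Y_{V'}\rangle$ coincides with $H$ in its ranking behavior and therefore ranks the neighbors of \emph{all} $v \in V$, including those outside $V'$. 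Each $v \in V'$ supplies $d+1$ samples, so a set of $k = |V'|$ nodes yields $O(kd)$ samples, which for small $k$ already exceeds the $I+J$ degrees of freedom.

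I expect the main obstacle to be bridging the gap between the training labels and the function we actually want to recover. The labels $Y_{V'}$ are $Q$-values, which need not themselves be linear in the features, so I cannot simply regress $H$ onto them and conclude $H = m$; the argument must instead go through RankPres, asking only that $H$'s output order match the $Q$-order within each neighborhood. The delicate part is then ensuring generalization off $V'$: ordering constraints are inequalities and in general carve out only a cone of admissible weight vectors, so I must choose $V'$ so that its ordering constraints are non-degenerate enough to isolate the direction of $m$ up to positive scale. Establishing this non-degeneracy --- equivalently, showing how few nodes suffice so that every admissible linear ranker consistent with $V'$ still respects the universal RankPres ordering on all of $V$ --- is the real content of the theorem and the step I would spend the most care on.
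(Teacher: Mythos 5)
Your first paragraph is, in essence, the paper's entire proof of Theorem~\ref{OPT_Routing}: the paper argues exactly as you do that, by Corollary~\ref{Learnability}, the node-independent linear metric $m$ is realizable by a single neuron, and since RankPres holds at every $v \in V$, that one network ranks every neighborhood optimally; it then asserts in a single sentence that $m$ ``can be learned by a DNN with samples generated from any $v \in V$ (or, more generally, from any set of nodes $V' \subseteq V$).'' Where you diverge is in reading ``learnable'' as actual recoverability of $m$'s ranking behavior from the data, and here you have put your finger on something the paper does not address at all: its notion of learnability is only the weak existence claim of Corollary~\ref{Learnability} --- there exists a parameter setting of the DNN, namely the coefficients of $m$, that achieves the optimal ranking --- with no argument that fitting the samples $\langle X_{V'}, Y_{V'}\rangle$ actually produces it. Your two specific worries are both well placed: the labels in $Y_{V'}$ are $Q$-values, which need not be linear in the features, so squared-loss regression (the procedure the paper actually uses, Equation~\ref{supervised}) cannot be claimed to return $m$ itself; and the within-neighborhood ordering constraints from $V'$ determine only a cone of admissible weight vectors, so one would need a non-degeneracy condition on $V'$ to conclude that every admissible linear ranker agrees with $m$ on nodes outside $V'$. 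The paper supplies neither and instead leans on the empirical similarity analysis of Section~V to justify the subsample selection. So the verdict is split: under the paper's weak reading of the theorem, your first paragraph already constitutes the whole proof and matches the paper's route exactly; under your stronger and more honest reading, the step you flag as ``the real content of the theorem'' is genuinely open, and the paper closes it by definition rather than by argument.
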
 

%\wedge u \in nbr(v)} \ {X_{v}}$ 
%{\langle Q^{*}(v, u_0), ..., Q^{*}(v, u_d)\rangle}$.
%$ \, \wedge \, V' \subseteq V}$
%\wedge u \in nbr(v)} 

\begin{proof}
From Corollary~\ref{Learnability}, $m$ can be learned by a DNN with samples generated from any $v \in V$ (or, more generally, from any set of nodes $V' \subseteq V$). Since the RankPres property holds for all $v \in V$, the resulting DNN achieves an optimal ranking of all nodes in $V$ and hence for $G$.
\end{proof}

%Since $m$ is linear, based on corollary 1, we can learn $m$ with only a subset of training samples. After we find a DNN that learns $m$, based on the RankPres property, we can find the optimal routing policy for all nodes.

%Since the monotonicity of the optimal $Q$-values and the measure function $m()$ is preserved for all $v \in V$ associated with a particular order $u_0, ..., u_d$ for given $v$, a set of samples generated from any $v \in V$ is possible to learn a linear function that preserves the monotonicity: $m(f_{s}(v), f_{a}(u_1)) < m(f_{s}(v), f_{a}(u_2)) \Rightarrow Q^{*}(v, u_1) > Q^{*}(v, u_2)$.
%CLEANUP:  Why is Q* used and not Q?  Why does Q* imply optimal routing?

Note that if  the $Q (v,u)$ value corresponds to the optimal (shortest) path $Q$-value for each $(v, u)$ pair, then the DNN indicated by Theorem~\ref{OPT_Routing} achieves an optimal routing policy for all nodes in $V$. Note also in this case that if the ranking metric $m$ satisfies {\em RankPres} not for all nodes but for almost all nodes, a policy learned from samples from one or more nodes $v$ that satisfy {\em RankPres} needs not to achieve optimal routing for all nodes. Nevertheless, if the relative measure of the number of nodes that do not satisfy {\em RankPres} to the number of nodes that do satisfy {\em RankPres} is small, it is still empirically possible that with high probability the policy achieves near-optimal routing, as we will later experimentally validate.

%if the monotonicity of optimal $Q$-values with respect to the ranking metric $m$ is preserved across almost all nodes, $v \in V_{M} \subseteq V \wedge |V_{M}| \approx |V|$, then a near-optimal routing policy can be learned by using training samples collected from a subset of nodes from $V_{M}$. Therefore, a near-optimal routing problem can be solved by finding a measure function of action features, $m: A^{k} \rightarrow \mathbb{R}$, that suffices the property of monotonicity with respect to the $Q^{*}$-values.

\begin{theorem}
[Cross-Graph Generalizability]
If there exists a ranking metric $m(f_{s}(v), f_{a}(u))$ that satisfies the RankPres property for the nodes in all graphs $G \! \in \! U$, then an optimal ranking policy is learnable by using training samples from one or more nodes in one (or more) chosen seed graph(s) $G^{*} \! \in \! U$.
\label{routing_generalizability}
\end{theorem}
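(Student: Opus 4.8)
The plan is to reduce Theorem~\ref{routing_generalizability} to the single-graph result of Theorem~\ref{OPT_Routing} by exploiting the fact that the ranking metric $m$ is a fixed linear function of the features that is defined independently of any particular graph. First I would observe that the coefficients of $m$ live in the feature space $\mathbb{R}^{I+J}$ and do not depend on $G$: the single-neuron DNN $H$ constructed in Corollary~\ref{Learnability}, whose weights are exactly the coefficients of $m$, is therefore \emph{one fixed hypothesis}. By assumption RankPres holds at every node of every $G \in U$, so this single $H$ induces, simultaneously at every node of every graph in $U$, a ranking of $nbr(v)$ that is monotone-consistent with the $Q$-ordering.

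Next I would instantiate the learning step on the seed graph. Since RankPres is assumed for all nodes of every graph in $U$, it holds in particular for all nodes of $G^{*}$, so Theorem~\ref{OPT_Routing} applies verbatim to $G^{*}$: training samples $\langle X_{V'}, Y_{V'}\rangle$ drawn from a subset $V' \subseteq V^{*}$ suffice to recover $H$. Because $m$ is linear, only finitely many samples---enough to determine its $I+J$ coefficients up to a positive scale---are required, which is the origin of the $O(kd)$ sample bound claimed earlier; allowing more than one seed graph simply enlarges the sample pool and never degrades identifiability. Finally, since the recovered hypothesis is (up to positive scaling) the graph-independent function $m$ itself, and $m$ satisfies RankPres for all nodes in all $G \in U$, the learned policy achieves optimal ranking at every node of every graph in $U$ with no further samples; specializing $Q$ to the shortest-path $Q$-value then yields optimal routing. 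This is exactly the claimed zero-shot, cross-graph generalization.

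I expect the main obstacle to be identifiability in the learning step: I must guarantee that samples confined to the seed graph actually pin down the \emph{direction} of $m$'s weight vector, rather than some spurious linear function that merely happens to rank $G^{*}$ correctly while failing elsewhere. Handling this cleanly requires a non-degeneracy condition on the chosen nodes $V'$---that the feature vectors appearing in $X_{V'}$ span enough directions of $\mathbb{R}^{I+J}$ that the ranking constraints admit an essentially unique (up to positive scaling) consistent linear separator. Once $m$ is identified in this sense, the transfer to all of $U$ is immediate and needs no probabilistic argument, because both the recovered function and the RankPres hypothesis are graph-independent; randomness re-enters only through the separate empirical claim that RankPres in fact holds for almost all nodes of almost all uniform random graphs.
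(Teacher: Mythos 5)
Your proposal is correct and follows essentially the same route as the paper: both reduce to Theorem~\ref{OPT_Routing} applied to the seed graph and then transfer across graphs via the graph-independence of the fixed linear metric $m$, whose single-neuron realization (Corollary~\ref{Learnability}) ranks correctly at every node of every $G \in U$ once RankPres is assumed universally. The identifiability concern you raise---that training confined to $G^{*}$ might recover a spurious linear function that ranks $G^{*}$ correctly yet fails elsewhere, absent a non-degeneracy condition on the sampled feature vectors---is genuine but is not resolved by the paper either, whose proof simply asserts that ``the soundness of the subsampling argument suffices''; on this point your write-up is, if anything, more careful than the original.
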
 
%$\forall^{\infty} G \in U$, $SIM_{G}(m, Q^{*}) \approx 1$ and $\exists G^{*}$ s.t. $SIM_{G^{*}}(m, Q^{*}) \approx 1$  $\implies$ a near-optimal routing policy is learnable that achieve generalizability across all graph $\in U$, by using training samples from $G^{*}$

%In addition, there is a rich set of $G \in U$ preserving the monotonicity of the optimal $Q$-values and the measure function $m()$. $\pi$ achieves the near-optimal routing performance on $G$. Let $U'$ be the set of graph $G$ such that $SIM_{G}(m, Q^{*}) \approx 1$. Because $\frac{|U'|}{|U|} \approx 1$ and $\frac{\sum_{G \in U'}{SIM_{G}(m, Q^{*})}}{|U|} \approx 1$, the routing policy $\pi$ achieve generalizability across all uniform random graphs in $U$.

\begin{proof}
From Theorem~\ref{OPT_Routing}, learning from $G^{*}$ with training samples chosen from one or more nodes in $G^{*}$ achieves an optimal ranking policy $\pi$ for all nodes in $G^{*}$. The soundness of the subsampling argument suffices for the same policy to achieve optimal ranking for nodes in any graph $G \in U$, as long as there exists a common ranking metric $m$ with the RankPres property holding for all graphs in $U$.
\end{proof}

Again, if Theorem~\ref{routing_generalizability} is considered in the context of $Q$-values corresponding to optimal shortest paths, the learned routing policy $\pi$ generalizes to achieving optimal routing over all graphs $G \! \in \! U$. And if we relax the requirement that {\em RankPres} holds for all nodes of all graphs in $U$ to only requiring that for almost all graphs $G \! \in \! U$, there is a high similarity between the ranking metric $m$ and the optimal $Q$-value, it becomes empirically possible that, with high probability, the policy achieves near-optimal routing. To this end, the choice of the seed graph $G^{*}$ and its subsampled nodes is biased so that the similarity is maximized. (In the experimental evaluation presented in the next section, we adopt Discounted Cumulative Gain (DCG) \cite{jarvelin2002cumulated} for formalizing the notion of similarity.)

To summarize, an efficient solution to a generalizable near-optimal routing problem is made feasible by choosing input features, for which some ranking metric preserves monotonicity with respect to the optimal route $Q$-values with high probability across the set of all graphs $U$. For the APNSP routing problem, the optimal $Q(v, u)$ values are retrieved by calculating the length of the shortest path starting from $v$ toward $u$ until reaching the destination. A near-optimal routing policy may then be learned via supervised learning on a single seed graph.
%if the property of monotonicity of $m()$ and $Q^{*}()$ can be shown across a rich class of graphs and a seed graph used for training sample collection is discoverable.
%More generally, to capture the notion that a ranking metric is largely preserved for the graph $G$, let us define the {\em ranking similarity} between the ranking metric and the optimal $Q$-values at node $v$, $SIM_{v}(m, Q^{*}) \in [0,1]$, as follows. If the sorted ascending order of $m(u), u \in nbr(v)$ is close to the sorted ascending order of $Q^{*}(v, u)$, the similarity is likely be 1.
%Moreover, let $SIM_{G}(m, Q^{*}) \in [0,1]$ be the average ranking similarity between $m()$ and $Q^{*}()$ across all node $v \in V$ in a given graph $G=(V,E)$, i.e., $SIM_{G}(m, Q^{*}) = \frac{\sum_{v \in V}{SIM_{v}(m, Q^{*})}}{|V|}$. Let $\forall^{\infty}$ represent the meaning ``almost all''. The following theorem demonstrate the generalizability of a routing policy to a rich class of uniform random graphs.

\section{Single Graph Learning}
% We develop both supervised and reinforcement learning approaches to train the DNNs.
Building on the above analysis, we propose a single seed graph learning method that generalizes with high probability over almost all uniform random graphs in $U$.  We guide input feature selection, in Section V.A, from network knowledge and verify the respective high ranking similarity with optimal routes, in Section V.B. %The analysis above shows that \emph{learning from a carefully chosen single graph with appropriate input feature designs will be able to generalize well over uniform random graphs.} Inspired by this, in this section we propose a single graph learning method with knowledge guided feature designs.

\subsection{Design of Input Features}

To learn a generalizable routing protocol across graphs with different scales, connectivities, and topologies, the input feature design of the DNN should be independent of global network configurations. 
%such that the learned $Q$-function is generalizable. 
%Each node uses the same features to predict one single $Q$-value of each neighbor node, and then the routing decision is made by choosing the neighbor node with the largest $Q$-value.
Therefore, we exclude any feature containing ID information specific to nodes or packets. Recall that each node knows its own coordinates and the coordinates of the origin and the destination. 
% Note that the coordinate query service can be implemented by equipping GPS on nodes and deploying an oracle for handling out-of-band queries of node location in wireless networks. 

%For a node $v$, let $nbr(v)$ denote the set of its one-hop neighbors. 
As motivated in Section II, input features based on Euclidean distance and the stretch factors have been found useful in local geographic routing protocols. Accordingly, we design the input features, including the state and action features, as follows:

\begin{itemize}
\item State feature, $f_{s}(O, D, v)$. 
For a packet with its specified origin $O$ and destination $D$ at node $v$, the state features are the vectors with the elements below.
    \begin{enumerate}
    % \item \textbf{O-D distance, $\overline{O D}$:} The Euclidean distance between origin $O$ and destination $D$. This feature is also used to distinguish the routing for different origin and destination pairs.
    \item \textbf{Distance to destination, $\overline{v D}$:} The Euclidean distance between node $v$ and the destination $D$.
    \item \textbf{Stretch factor, $S\!F_{O, D, v}$:} The stretch of the indirect distance between $O$ and $D$ that is via node $v$ with respect to the direct distance between $O$ and $D$. 
    %It captures the extent of the search region to find the shortest path. 
    \begin{equation*}
    \begin{array}{l}
    S\!F_{O, D, v} = \frac{\overline{O v}+\overline{v D}}{\overline{O D}}.
    \label{}
    \end{array}
    \end{equation*} 
    \end{enumerate}

\item Action feature, $f_{a}(O, D, a) = f_{s}(O, D, u)$. Given the origin $O$ and destination $D$, the feature for the action that forwards a packet from node $v$ to node $u \in nbr(v)$, $f_{a}(O, D, a)$, is chosen to be the same with the state feature of $u$, $f_{s}(O, D, u)$. 
\end{itemize}

In what follows, we consider learning with two different combinations of input features, one with only $\overline{v D}, \overline{u D}$ and the other with both $\overline{v D}, \overline{u D}$ and $S\!F_{O, D, v}, S\!F_{O, D, u}$.

\subsection{Ranking Similarity between $m$ for Input Features and $Q^{*}$}

Based on the little theory of Section~IV, we now investigate the existence of a ranking metric $m$ for the first input feature, as well as the pair of input features, and demonstrate that in both cases the ranking similarity is close to 1 across nodes in almost all graphs $G$, regardless of their size and density.  The results thus empirically corroborate the feasibility of routing policy generalization from single graph learning, and also guide the selection of seed graphs and training samples.

Let $SIM_{v}(m, Q^{*}) \in [0,1]$ denote the {\em ranking similarity} between the ranking metric $m$ and the optimal $Q$-values ($Q^{*}$) at node $v$.  Moreover, let $SIM_{G}(m, Q^{*}) \in [0,1]$ denote the average ranking similarity between $m$ and $Q^{*}$ across all nodes $v \in V$ in a given graph $G=(V,E)$, i.e., $SIM_{G}(m, Q^{*}) = \frac{\sum_{v \in V}{SIM_{v}(m, Q^{*})}}{|V|}$.

\vspace*{1.5mm}
Conceptually, $SIM_{v}(m, Q^{*})$ should tend to 1 as the order of neighbors in the sorted ascending order of $m(u), u \in nbr(v)$, comes closer to matching the order of the neighbors in the sorted ascending order of $Q^{*}(v, u)$. More specifically, we adopt Discounted Cumulative Gain (DCG) \cite{jarvelin2002cumulated} to formally define the ranking similarity. The idea of DCG is to evaluate the similarity between two ranking sequences by calculating the sum of graded relevance of all elements in a sequence. First, a sorted sequence of $Q^{*}(v, u)$ with length $L = |nbr(v)|$, $u \! \in \!  nbr(v)$, is constructed as the ideal ranking, denoted by $A$. For each position $i$ in $A$, we assign a graded relevance $rel_{A}[i]$ by following the rule: $rel_{A}[i] = (L - i)^2, i = 0 ... L-1$\footnote{The assignment of graded relevance could be any way to decrease the value from left to right positions. Here we use squared value to assign dominant weights to the positions close to leftmost.}. The value of DCG accumulated at a particular rank position $\tau$ is defined as:
\begin{equation*}
\begin{array}{l}
DCG_{\tau} = \sum_{i=1}^{\tau}{\frac{rel[i]}{\log_{2}(i+1)}}.
\label{DCG}
\end{array}
\end{equation*} 
For example, let $A = [4, 1, 3, 2, 5]$. The corresponding $rel_{A}[i]$ and $\frac{rel_{A}[i]}{\log_{2}(i+1)}$ values are shown in Table~\ref{ideal_DCG_table}. Then $A$'s $DCG_{3} = 25+10.095+4.5 = 39.595$.

\begin{table}[htb!]
\caption{An example of DCG calculation for an ideal ranking $A=[4, 1, 3, 2, 5]$.}
\begin{center}
\begin{tabular}{|c|c|c|c|c|}
\hline
\textbf{$i$} & \textbf{$A[i]$} & \textbf{$rel_{A}[i]$} & \textbf{$\log_{2}(i+1)$} & \textbf{$\frac{rel_{A}[i]}{\log_{2}(i+1)}$} \\
\hline
1 & 4 & 25 & 1 & 25\\
2 & 1 & 16 & 1.585 & 10.095\\
3 & 3 & 9 & 2 & 4.5\\
4 & 2 & 4 & 2.322& 1.723\\
5 & 5 & 1 & 2.807 & 0.387\\
\hline
\end{tabular}
\label{ideal_DCG_table}
\end{center}
\vspace{-2mm}
\end{table}
\vspace*{-4mm}
\begin{table}[htb!]
\caption{An example of DCG calculation for an estimated ranking $B = [1, 2, 4, 5, 6]$.}
\begin{center}
\begin{tabular}{|c|c|c|c|c|}
\hline
\textbf{$j$} & \textbf{$B[j]$} & \textbf{$rel_{B}[j]$} & \textbf{$\log_{2}(j+1)$} & \textbf{$\frac{rel_{B}[j]}{\log_{2}(j+1)}$} \\
\hline
1 & 1 & 16 & 1 & 16\\
2 & 2 & 4 & 1.585 & 2.524\\
3 & 4 & 25 & 2 & 12.5\\
4 & 5 & 1 & 2.322& 0.431\\
5 & 6 & 0 & 2.807 & 0\\
%5 & 3 & 9 & 2.807 & 3.482\\
\hline
\end{tabular}
\label{estimated_DCG_table}
\end{center}
\vspace{-2mm}
\end{table}

Next, a sorted sequence of $m(u)$ with length $L = |nbr(v)|$, $u \in nbr(v)$,  is constructed as the estimated ranking, denoted by $B$. Let $B = [1, 2, 4, 5, 6]$. The graded relevance for $B[j]$ depends on the position of $B[j]$ in $A$ and follows the rule: 
\begin{align*}
rel_{B}[j] = \begin{cases} rel_{A}[i], \ \ i\!f \ (\exists i \ : \  A[i] = B[j]) \\
0, \ \ \ \ \ \ \ \ \ otherwise
\end{cases}
\label{}
% \end{array}
\end{align*} 
Then $B$'s corresponding $rel_{B}[j]$ and $\frac{rel_{B}[j]}{\log_{2}(j+1)}$ values are shown in Table~\ref{estimated_DCG_table}. Accordingly, $B$'s $DCG_{3} = 16+2.524+12.5 = 31.024$. The ranking similarity between $B$ and $A$ is calculated by the ratio of $B$'s DCG to $A$'s DCG, i.e., $\frac{31.024}{39.595} = 0.784$.

\vspace*{2mm}
\noindent
\textbf{Ranking similarity with Euclidean distance
%, $\overline{v D}$,  
input feature.}
% Here we verify if only using Euclidean distance as the input features $f_{s}(v), f_{a}(u) = \langle \overline{v D}, \overline{u D}\rangle$ is sufficient to learn the optimal ranking for the cross-node generalizability and cross-graph generalizability. 
For $\langle f_{s}(v), f_{a}(u) \rangle = \langle \overline{v D}, \overline{u D}\rangle$,
we examine if there exists a linear function $m$ that yields high $SIM_{v}(m, Q^{*})$ and $SIM_{G}(m, Q^{*})$ across different network configurations. Let the ranking metric $m$ be $m(f_{s}(v), f_{a}(u)) = -\overline{u D}$. (Recall that $Q^{*}(f_{s}(v), f_{a}(u))$ is the cumulative negative length of the shortest path.) 

In Figure~\ref{d_node_similarity}, we plot $SIM_{v}(m, Q^{*})$ for all $v, D \in V$ for a given connected uniform random graph  with size in \{50, 100\} and density in \{3, 5\}. Each point represents the value of $SIM_{v}(m, Q^{*})$ for a given $v$ and $D$. All $|V|^2$ points are shown in ascending order. The sub-figures in Figure~\ref{d_node_similarity} illustrate that at least 75\% of the points have high similarity ($\geq 90\%$) between $m$ and $Q^{*}$. According to Theorem~\ref{OPT_Routing}, the distribution of $SIM_{v}(m, Q^{*})$ implies that training samples collected from one (or a few) nodes in this large set can be sufficient to learn a near-optimal routing policy that achieves high accuracy across nodes. On the other hand, using training samples generated from the nodes with relatively low $SIM_{v}(m, Q^{*})$ should be avoided. This observation motivates a knowledge-guide mechanism to carefully choose the data samples used for training.

In Figure~\ref{d_graph_similarity}, we show the distribution of $SIM_{G}(m, Q^{*})$ in ascending order for a set of 100 graphs, respectively with size in \{50, 100\} and density in \{3, 5\}. Note that in high density networks (say density 4), all 100 graphs have high similarity ($\geq 90\%$) between $m$ and $Q^{*}$, implying that training with samples from almost any  high density graph can be sufficient to learn a routing policy with high performance. On the other hand, in low density networks (say density 2), there exist a small set of graphs with slightly low $SIM_{G}(m, Q^{*})$, pointing to the importance of careful selection of seed graph(s) for training. In addition, we observe that the upper bound of $SIM_{G}(m, Q^{*})$ decreases as the network size increases. These  results implicitly show that, with respect to the chosen input feature, graphs with smaller size but higher density can be better seeds for learning generalized routing policies.

According to the results in Figures~\ref{d_node_similarity} and~\ref{d_graph_similarity}, using Euclidean distance implies the existence of a ranking metric that should with high probability satisfy cross-node generalizability and cross-graph generalizability. The ranking function, $m(f_{s}(v), f_{a}(u)) = -\overline{u D}$, or an analogue can be easily learned by a DNN, and then the learned routing policy should achieve high performance across unit-disk uniform random graphs.

\begin{figure}[hbt!]
    \centering
    \subfigure[Size 50, Density 3, rnd=19]
    {
        \includegraphics[width=0.22\textwidth]{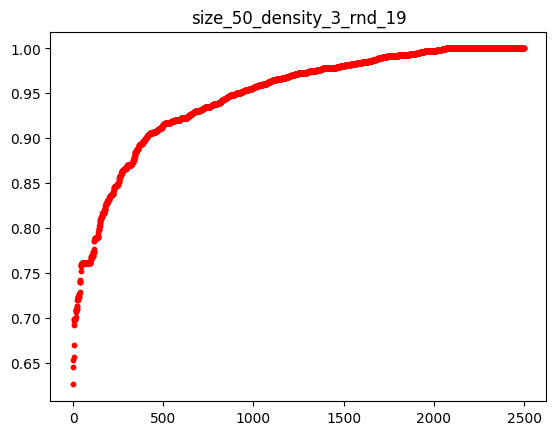}
        \label{d_node_similarity_size50_density3}
    }
    \subfigure[Size 50, Density 5, rnd=19]
    {
        \includegraphics[width=0.22\textwidth]{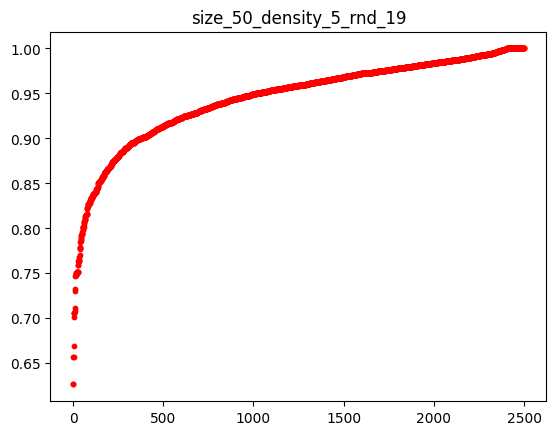}
        \label{d_node_similarity_size50_density5}
    }
    \subfigure[Size 100, Density 3, rnd=48]
    {
        \includegraphics[width=0.22\textwidth]{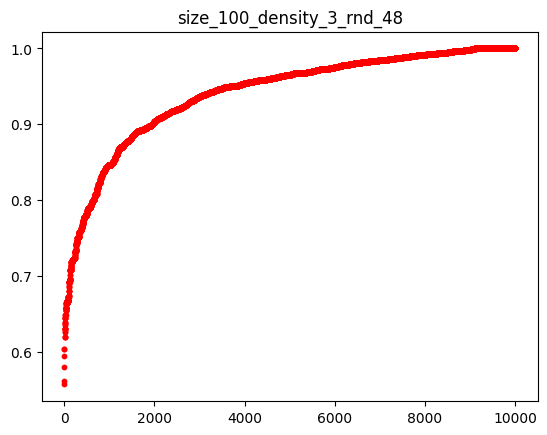}
        \label{d_node_similarity_size100_density3}
    }
    \subfigure[Size 100, Density 5, rnd=48]
    {
        \includegraphics[width=0.22\textwidth]{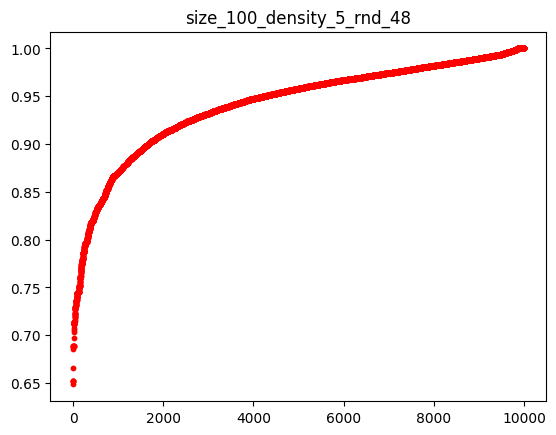}
        \label{d_node_similarity_size100_density5}
    }
    \caption{Distribution of $SIM_{v}(m, Q^{*})$ given a unit-disk uniform random graph with random seed (rnd), where $m$ is the ranking metric for Euclidean distance $\overline{u D}$.}
    \label{d_node_similarity}
%    \vspace{-2mm}
\end{figure}

\begin{figure}[hbt!]
    \centering
    \subfigure[Size 50, Density 3]
    {
        \includegraphics[width=0.22\textwidth]{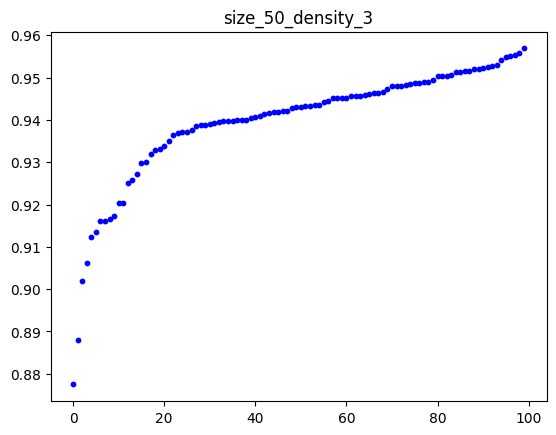}
        \label{d_graph_similarity_size50_density3}
    }
    \subfigure[Size 50, Density 5]
    {
        \includegraphics[width=0.22\textwidth]{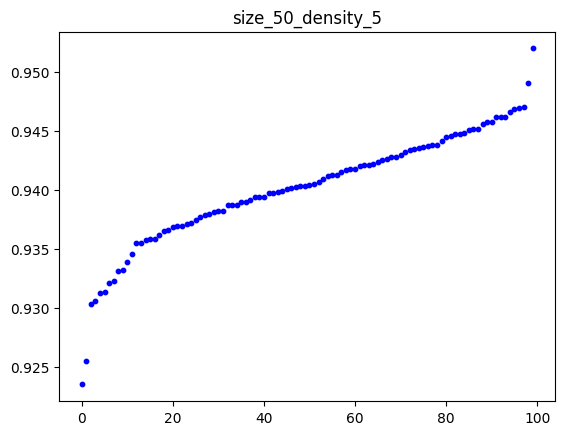}
        \label{d_graph_similarity_size50_density5}
    }
    \subfigure[Size 100, Density 3]
    {
        \includegraphics[width=0.22\textwidth]{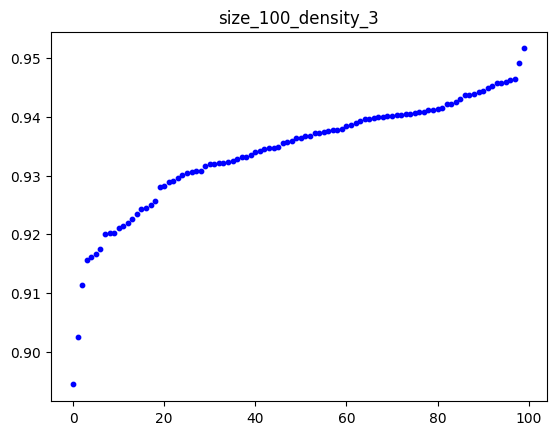}
        \label{d_graph_similarity_size100_density3}
    }
    \subfigure[Size 100, Density 5]
    {
        \includegraphics[width=0.22\textwidth]{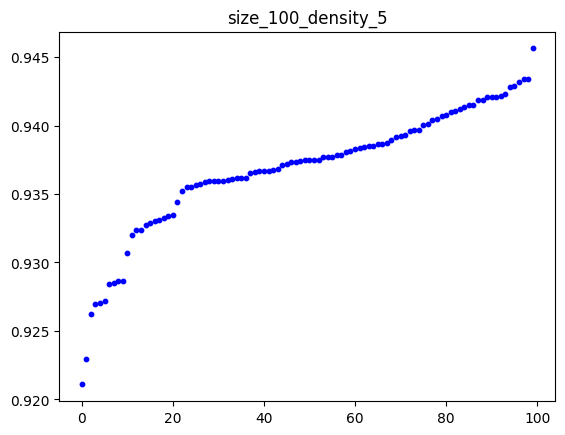}
        \label{d_graph_similarity_size100_density5}
    }
    \caption{Distribution of $SIM_{G}(m, Q^{*})$ across 100 unit-disk uniform random graphs, where $m$ is the ranking metric for Euclidean distance $\overline{u D}$.}
    \label{d_graph_similarity}
    \vspace{-2mm}
\end{figure}

\vspace*{1mm}
\noindent
\textbf{Ranking similarity with Euclidean distance and stretch factor
%, $\overline{v D}$ and $S\!F_{O, D, v}$,  
input features.}  We examine if there exists a linear function $m$ that yields high $SIM_{v}(m, Q^{*})$ and $SIM_{G}(m, Q^{*})$ across different network configurations, for $\langle f_{s}(v), f_{a}(u) \rangle = \langle \overline{v D}, \frac{\overline{O v}+\overline{v D}}{\overline{O D}}, \overline{u D}, \frac{\overline{O u}+\overline{u D}}{\overline{O D}}\rangle$.  Let the ranking metric $m$ as $m(f_{s}(v), f_{a}(u)) = -0.875 \overline{u D} - 0.277 \frac{\overline{O u}+\overline{u D}}{\overline{O D}}$. Note that $f_{s}(v)$ does not affect the sorting of $m(f_{s}(v), f_{a}(u))$. For convenience, we assign zero weight for $\overline{v D}$ and $\frac{\overline{O v}+\overline{v D}}{\overline{O D}}$ in $m$.

\vspace*{1mm} In Figures~\ref{d_sf_node_similarity} and~\ref{d_sf_graph_similarity}, we apply the same network configurations as in Figures~\ref{d_node_similarity} and~\ref{d_graph_similarity} to plot the distribution of $SIM_{v}(m, Q^{*})$ and $SIM_{G}(m, Q^{*})$  for the proposed choice of $m$. Note that because the stretch factor relies on $v, O, D \in V$, there are $|V|^3$ points plotted in Figure~\ref{d_sf_node_similarity} with ascending order. The sub-figures in Figure~\ref{d_sf_node_similarity} demonstrate that at least 80\% of the points have similarity above 90\%, which is higher than the corresponding percentage of points in Figure~\ref{d_node_similarity}. These observations indicate that using both Euclidean distance and stretch factor assures the existence of ranking metric, and in turn implies learnability of a DNN that with high probability achieves even better cross-node generalizability, compared to the one using only Euclidean distance in the input features.

Figure~\ref{d_sf_graph_similarity} shows a similar distribution of $SIM_{G}(m, Q^{*})$ to Figure~\ref{d_graph_similarity}. In order to achieve cross-graph generalizability, we need to carefully select a seed graph for training, especially from graphs with moderate size and high density. Also, instead of using all the nodes to generate data samples, using a subset of nodes that avoid relatively low $SIM_{v}(m, Q^{*})$ further improves the cross-node generalizability.

\begin{figure}[hbt!]
    \centering
%    \vspace{-1mm}
    \subfigure[Size 50, Density 3, rnd=19]
    {
        \includegraphics[width=0.22\textwidth]{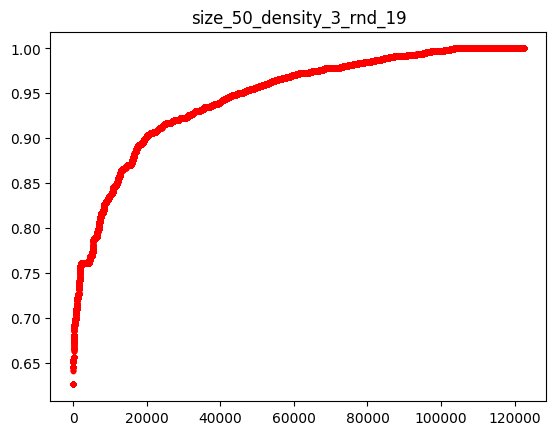}
        \label{d_sf_node_similarity_size50_density3}
    }
    \subfigure[Size 50, Density 5, rnd=19]
    {
        \includegraphics[width=0.22\textwidth]{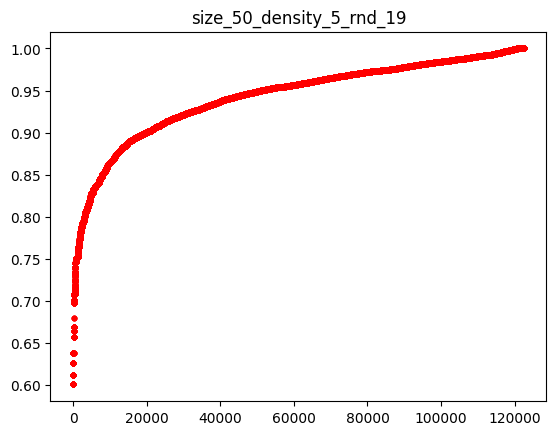}
        \label{d_sf_node_similarity_size50_density5}
    }
    \subfigure[Size 100, Density 3, rnd=48]
    {
        \includegraphics[width=0.22\textwidth]{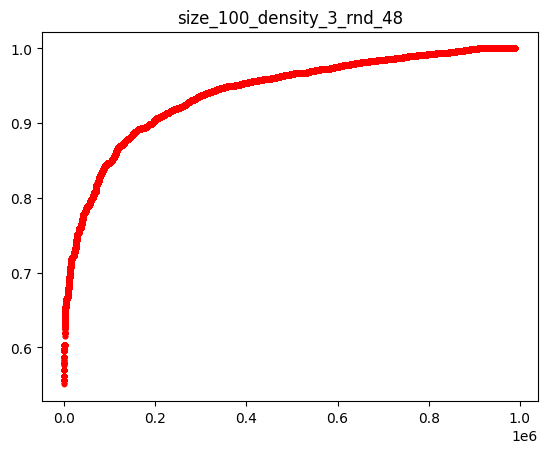}
        \label{d_sf_node_similarity_size100_density3}
    }
    \subfigure[Size 100, Density 5, rnd=48]
    {
        \includegraphics[width=0.22\textwidth]{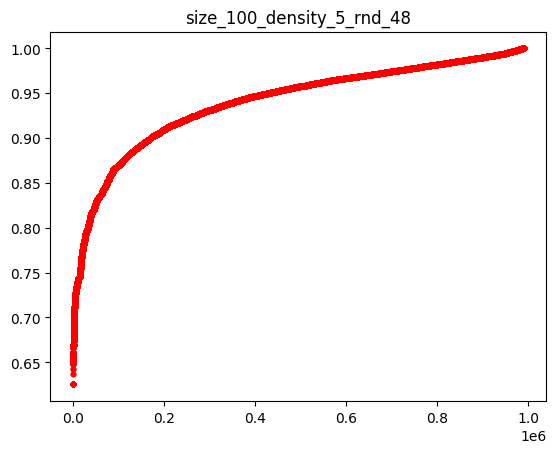}
        \label{d_sf_node_similarity_size100_density5}
    }
    \caption{Distribution of $SIM_{v}(m, Q^{*})$ given a unit-disk uniform random graph with a random seed (rnd), where $m$ is the ranking metric for Euclidean distance $\overline{u D}$ and stretch factor $\frac{\overline{O u}+\overline{u D}}{\overline{O D}}$.}
    \label{d_sf_node_similarity}
%    \vspace{-2mm}
\end{figure}

\begin{figure}[hbt!]
    \centering
%    \vspace{-1mm}
    \subfigure[Size 50, Density 3]
    {
        \includegraphics[width=0.22\textwidth]{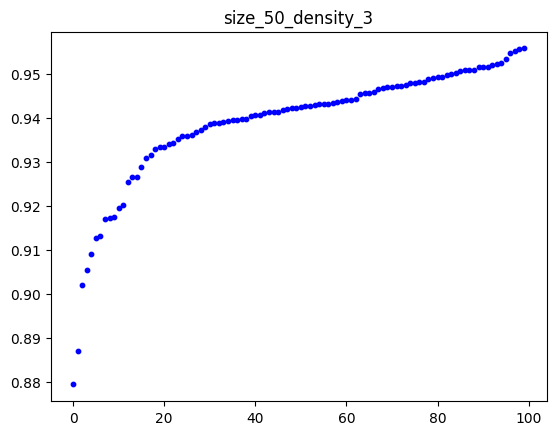}
        \label{d_sf_graph_similarity_size50_density3}
    }
    \subfigure[Size 50, Density 5]
    {
        \includegraphics[width=0.22\textwidth]{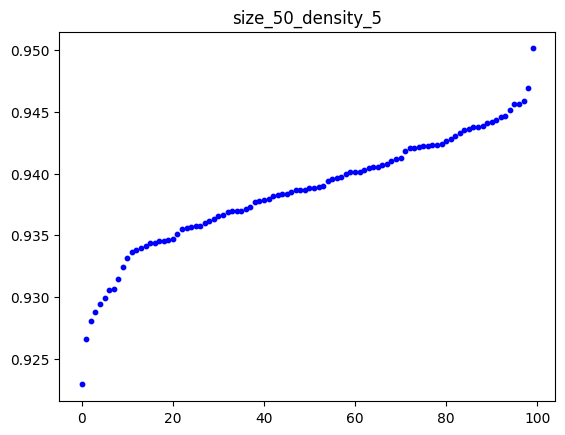}
        \label{d_sf_graph_similarity_size50_density5}
    }
    \subfigure[Size 100, Density 3]
    {
        \includegraphics[width=0.22\textwidth]{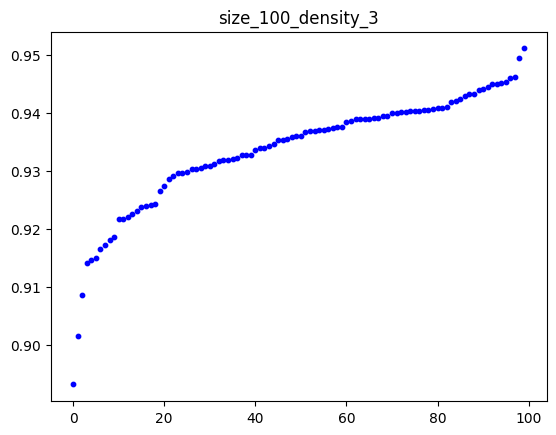}
        \label{d_sf_graph_similarity_size100_density3}
    }
    \subfigure[Size 100, Density 5]
    {
        \includegraphics[width=0.22\textwidth]{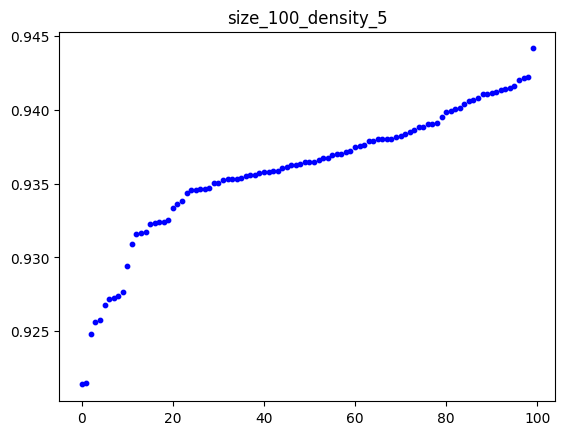}
        \label{d_sf_graph_similarity_size100_density5}
    }
    \caption{Distribution of $SIM_{G}(m, Q^{*})$ across 100 unit-disk uniform random graphs, where $m$ is the ranking metric for Euclidean distance $\overline{u D}$ and stretch factor $\frac{\overline{O u}+\overline{u D}}{\overline{O D}}$.}
    \label{d_sf_graph_similarity}
    % \vspace{-2mm}
\end{figure}

\vspace*{-3mm}
\subsection{Selection of Seed Graph and Graph Subsamples}

Based on our observations in Section V.B, to achieve both cross-graph generalizability and cross-node generalizability, we develop a knowledge-guided mechanism with two components: (1) seed graph selection and (2) graph subsample selection.

\vspace*{1.5mm}
\noindent
\textbf{Seed graph selection.} The choice of seed graph depends primarily on the analysis of $SIM_{G}(m, Q^{*})$ across a sufficient set of uniform random graphs with diverse sizes and densities. 
%Empirically, with the use of Euclidean distance and stretch factor, a good seed graph with $SIM_{G}(m, Q^{*}) \approx 1$ is likely to exist in a set of graphs with small size $\in [10, 50] $ and high density $\in [4, 5]$. 

There may be applications where analysis of large (or full) graphs is not always possible. In such situations, given a graph $G$, an alternative choice of seed graph can be from a small subgraph $G'=(V', E'), V' \subset V, E' \subset E$ with relatively high $SIM_{G}(m, Q^{*})$. Note that, in Theorem~\ref{OPT_Routing}, for a graph $G=(v,E)$ satisfying the {\em RankPres} property for all $v \in V$, {\em RankPres} still holds for $v' \in V'$ in a subgraph $G'=(V', E')$ of $G$. This is because the learnable function $m$ still preserves the optimal routing policy for all nodes in a subset of $nbr(v)$.
%, $u' \in nbr(v') \subset nbr(v)$.

\vspace*{1.5mm}
\noindent
\textbf{Graph subsamples selection.} As observed for $SIM_{v}(m, Q^{*})$ in Section V.B, there may exist a set of nodes, $V_{M}$, with relatively low $SIM_{v}(m, Q^{*})$ in a graph. Using data samples derived from nodes in that set can be harmful to the generalization performance. However, training with samples generated from all nodes in $V \setminus V_{M}$ is not necessary to achieve generalization and may also incur high computational complexity for training. To efficiently choose a set of say $k$ nodes for generating $kd$ training samples, where $d$ denotes the average node degree, we provide the following heuristic.

We assume that, with high probability, we are able to find a random graph $G$ with high $SIM_{G}(m, Q^{*})$ to serve as the seed graph. To limit the search in $G$ for sampling, we search for (one or more) shortest paths $p$ associated with a given destination $D$ each of whose nodes $v$ have a high $SIM_{v}(m, Q^{*})$
%To limit the search for $k$ nodes in $G$ for sampling, we search for (one or more) shortest paths $p$ associated with a given $O$ and $D$ each of whose nodes $v$ have a high $SIM_{v}(m, Q^{*})$.
%Note that, in supervised learning, the candidate path between $O$ and $D$ can be the shortest path to reduce $k$. In RL, the target path between $O$ and $D$ can be the predicted shortest path by the current model. The predicted path is likely to converge to the actual shortest path as the number of completed training episodes increases, so that the number of explored nodes is also limited. 
%To show the efficacy of using the heuristic of subsampling for generalizability, we define $SIM_{p}(m, Q^{*})$ to be the average ranking similarity between $m$ and $Q^{*}$ across all nodes on a given path $p$, i.e., $SIM_{p}(m, Q^{*}) = \frac{\sum_{v \in p}{SIM_{v}(m, Q^{*})}}{|p|}$. 

Let $SIM_{p}(m, Q^{*}) \! = \! \frac{\sum_{v \in p}{SIM_{v}(m, Q^{*})}}{|p|}$ denote the average $SIM_{v}(m, Q^{*})$ for the nodes on a given path $p$. 
%Our heuristic is based on the following results that choosing path(s) with high $SIM_{p}(m, Q^{*})$ leads to a set of chosen nodes with relatively high $SIM_{v}(m, Q^{*})$ compared to $SIM_{G}(m, Q^{*})$. 
In Figures~\ref{d_path_similarity} and~\ref{d_sf_path_similarity_100}, we show the distribution of $SIM_{p}(m, Q^{*})$ for a given graph $G$ with a fixed $D$; the x-axis corresponds to origins $O$ that are sorted in a decreasing order of path stretch $\frac{d_{sp}(O, D)}{d_{e}(O, D)}$: the rightmost point on the x-axis is thus the origin $O$ with the smallest path stretch. 
%Note that there are $|V|-1$ shortest paths from different origins to $D$. 
The results in Figures~\ref{d_path_similarity} and~\ref{d_sf_path_similarity_100} illustrate that for both input feature selections {\em a shortest path with relatively low path stretch is likely to have $SIM_{p}(m, Q^{*})$ close to 1}. 
%This trend holds for both the $m$ for Euclidean distance and the $m$ for Euclidean distance and stretch factor.

Therefore, the policy of subsample selection for a given graph $G=(V,E)$ is specified as follows: 

\begin{enumerate}

\item Select a random destination $D \in V$.

\item Select $\phi$ origins with the lowest path stretch, $v_{0}, ..., v_{\phi-1} \in V \setminus {D}$.

\item For each shortest path $p_{0}, ..., p_{\phi-1}$ associated with the origin-destination pair $(v_{0}, D), ..., (v_{\phi-1}, D)$, respectively, collect the subsamples $\langle X, Y \rangle$, where $X \! = \! \bigcup_{v \in \{p_{0}, ..., p_{\phi-1}\} \wedge u \in nbr(v)}{\{\langle f_{s}(v),f_{a}(u)\rangle \}}$ and $Y \! = \!  \bigcup_{v \in \{p_{0}, ..., p_{\phi-1}\} \wedge u \in nbr(v)}{\{Q^{*}(v, u)\}}$.

\end{enumerate}

% \vspace*{1.5mm}
% Note that the subsampling policy above applies to DNN with supervised learning since the computation of path stretch relies on the knowledge of $Q^{*}$. The following section will adapt the sample selection policy to RL.

% \vspace*{-2mm}
\begin{figure}[thb!]
    \centering
    \subfigure[Size 50, Density 3, $D$=49]
    {
        \includegraphics[width=0.22\textwidth]{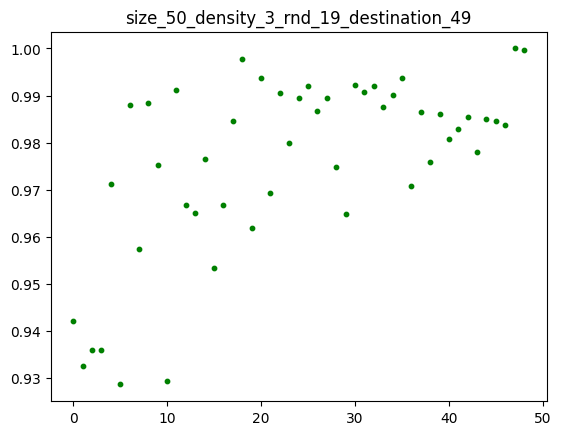}
        \label{d_path_similarity_size50_density3}
    }
    \subfigure[Size 50, Density 5, $D$=49]
    {
        \includegraphics[width=0.22\textwidth]{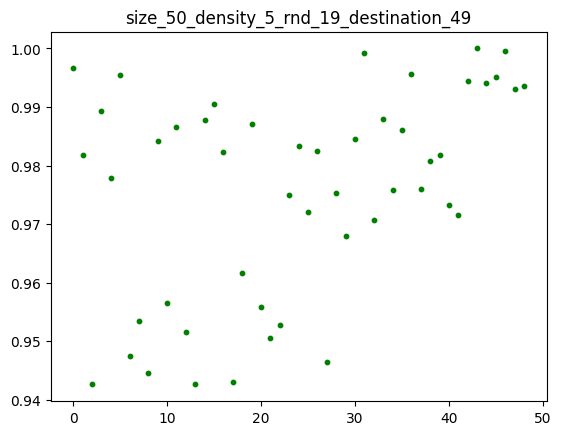}
        \label{d_path_similarity_size50_density5}
    }
    \subfigure[Size 100, Density 3, $D$=10]
    {
        \includegraphics[width=0.22\textwidth]{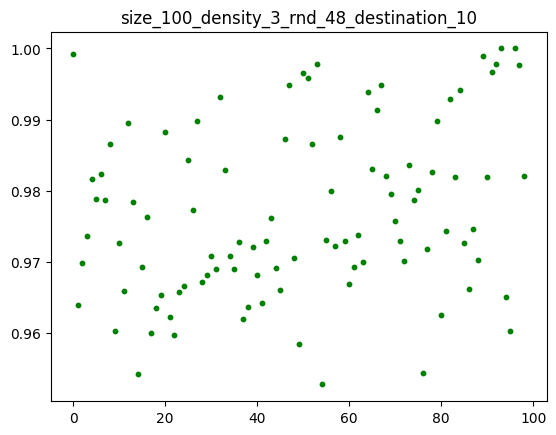}
        \label{d_path_similarity_size100_density3}
    }
    \subfigure[Size 100, Density 5, $D$=10]
    {
        \includegraphics[width=0.22\textwidth]{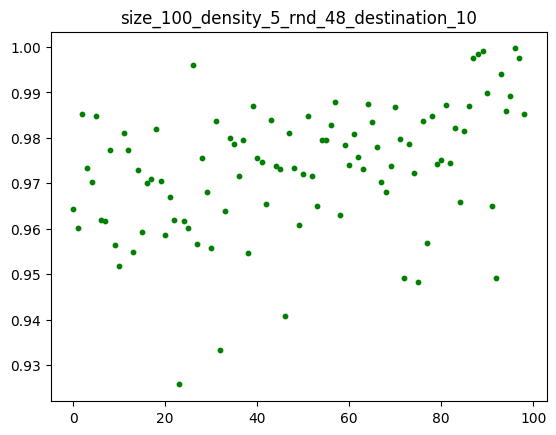}
        \label{d_path_similarity_size100_density5}
    }
    \caption{Distribution of $SIM_{p}(m, Q^{*})$ for all-origins-to-one-destination shortest paths given a unit-disk uniform random graph with random seed (rnd), where $m$ is the ranking metric for Euclidean distance $\overline{u D}$.}
    \label{d_path_similarity}
%    \vspace{-2mm}
\end{figure}

% \vspace*{-2mm}
\begin{figure}[hbt!]
\vspace{2mm}
    \centering
    \subfigure[Size 50, Density 3, $D$=49]
    {
        \includegraphics[width=0.22\textwidth]{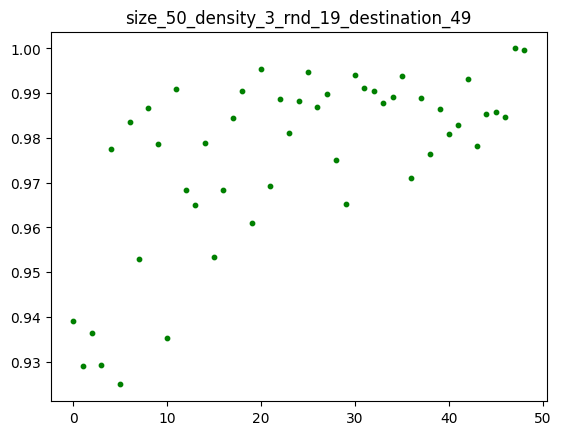}
        \label{d_sf_path_similarity_size50_density3}
    }
    \subfigure[Size 50, Density 5, $D$=49]
    {
        \includegraphics[width=0.22\textwidth]{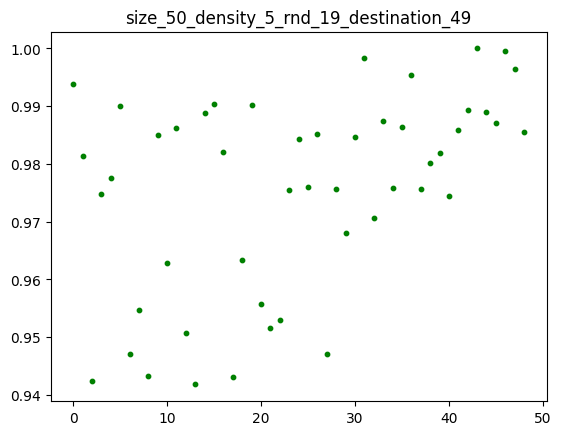}
        \label{d_sf_path_similarity_size50_density5}
    }
    \subfigure[Size 100, Density 3, $D$=10]
    {
        \includegraphics[width=0.22\textwidth]{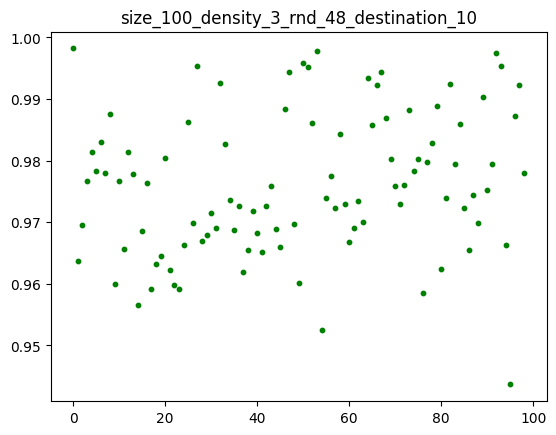}
        \label{d_sf_path_similarity_size100_density3}
    }
    \subfigure[Size 100, Density 5, $D$=10]
    {
        \includegraphics[width=0.22\textwidth]{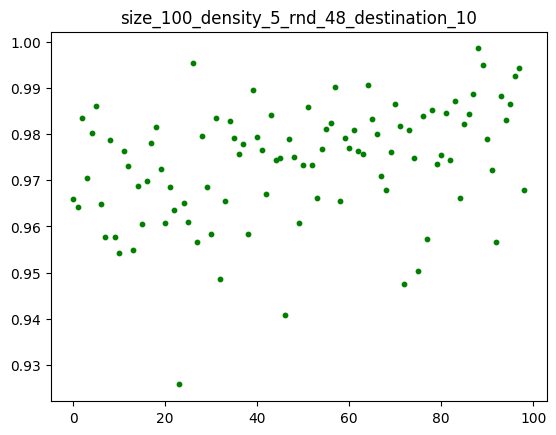}
        \label{d_sf_path_similarity_size100_density5}
    }
    \caption{Distribution of $SIM_{p}(m, Q^{*})$ for all-origins-to-one-destination shortest paths given a unit-disk uniform random graph with random seed (rnd), where $m$ is the ranking metric for $\overline{u D}$ and stretch factor $\frac{\overline{O u}+\overline{u D}}{\overline{O D}}$.}
    \label{d_sf_path_similarity_100}
    % \vspace{-3mm}
\end{figure}

%\vspace*{-1mm}
\subsection{Supervised Learning for APNSP with Optimal $Q$-values}

Given the dataset $\langle X, Y\rangle$ collected by using the subsampling policy for the seed graph, we train a DNN based on supervised learning to capture the optimal ranking policy. Specifically, suppose the DNN $H$ is parameterized by $\theta$. We seek to minimize the following loss function:
\begin{align}\label{supervised}
    \min_{\theta}~~\sum_{\langle X, Y\rangle} \|H_{\theta}(f_{s}(v),f_{a}(u)) - Q^*(v,u)\|^2.
%\vspace*{-1mm}
\end{align}
Note that we assume that the optimal $Q$-values are known for the seed graph in the supervised learning above, which can be obtained based on the shortest path routing policies of the seed graph. By leveraging these optimal $Q$-values and supervised learning on the seed graph, a generalized routing policy is learned for APNSP routing over almost all uniform random graphs, as we validate later in the experiments.

\subsection{Reinforcement Learning for APNSP}
For the case where the optimal $Q$-values of graphs are unknown, we  develop an approach for solving the APNSP problem based on RL. Using the same input features and the same seed graph selection procedure, the RL algorithm continuously improves the quality of $Q$-value estimations by interacting with the seed graph.  

We consider the same DNN architecture as shown in Figure~\ref{DNN}. In contrast to the supervised learning algorithm, where we collect only a single copy of data samples from a set of chosen (shortest path) nodes once before training, in RL new training data samples from nodes in a shortest path, predicted by most recent training episode (i.e., based on the current $Q$-value estimation), are collected at the beginning of each training episode. Remarkably, the generalizability of the resulting RL routing policy across almost all graphs in $U$ is preserved.
%, if the {\em RankPres} property is preserved with respect to a ranking metric $m(f_{s}(v), f_{a}(u))$ of the given input features $f_{s}(v), f_{a}(u)$. 

The details of the RL algorithm, named as RL-APNSP-ALGO, are shown in Algorithm~\ref{RL_ALGO}. More specifically, in Algorithm~\ref{RL_ALGO}, Lines 2 to 20 outline the sample selection and training procedure for each episode. The for-loop from Lines 4 to 7 determines the set of chosen shortest paths and the associated nodes for subsampling, where the shortest paths are predicted based on the current DNN estimation of $Q(v, u)$ for a routing node $v$ and its neighbors $u \in nbr(v)$. The neighbor $u$ is chosen to be the next routing node until the destination is reached or none of the neighbors remain unvisited. Note that each node can be only visited once in a round of path exploration. It is possible to output a path $p$ without the destination $D$, which  can still be used for subsampling in this episode. The for-loop in Lines 10 to 17 generates the training data samples from nodes shown in the chosen paths, where the data labels $Y$, i.e., target $Q$-values that we train the DNN to fit for improving the estimation accuracy of optimal $Q$-values, are given by: 
\begin{align}
    Q^{target}(s, a) = r(s, a) \! + \! \gamma \max_{a'}{Q(s', a')}.
\label{Q_estimate}
%\vspace*{-2mm}
\end{align}
Next, based on the collected dataset, in Lines 18 to 20, the DNN is trained for a fixed number of iterations to minimize the following loss function:
\begin{align}\label{fit}
    \min_{\theta}~~\sum_{\langle X, Y\rangle} \|H_{\theta}(f_{s}(v),f_{a}(u)) - Q^{target}(v,u)\|^2.
%\vspace*{-1mm}
\end{align}
% The estimation of $Q$-value in RL relies on the prediction of $Q$-value by the currently learned model and the observation on the reward for $(s, a)$. According to Q-learning, the estimate of $Q$-value is defined as follows:
% \begin{align}
%     Q^{new}(s_t, a_t) =  Q(s_t, a_t) \! + \! \alpha  [r(s, a) \! + \! \gamma \max_{a}{Q(s_{t+1}, a')}]
% \label{Q_estimate}
% \vspace*{-2mm}
% \end{align}
% where $\alpha, 0 < \alpha \leq 1$ is a constant  learning rate. 
Since the target $Q$-values approach the optimal $Q$-values as the number of training episodes increases, minimizing Equation~\ref{fit} will eventually lead to a learned model that nearly matches the supervised learning in Equation~\ref{supervised}.

\begin{algorithm}
\SetAlgoLined
Input: $nn$: randomly initialized DNN; $G^{*}$: seed graph; $\Phi$: set of chosen sources; $D$: chosen destination

\For{$episode = 1 ... EpiNum$}{
$V_{T}$ := \{\}\;
\For{$O \in \Phi$}{
  Use $nn$ to predict a shortest path $p$ for $(O, D)$ in $G^{*}$\;
  $V_{T}$ := $V_{T} \cup \{v| v \in p\}$\;
}
$X$:=[], $Y$:=[], 
$i := 0$\;
\For{$v \in V_{T}$}{
  \For{$u \in nbr(v)$}{
    $X[i] := \langle f_{s}(v),f_{a}(u)\rangle$\;
    Estimate $Q(v, u)$ using Equation~\ref{Q_estimate}\;
    $Y[i] := Q(v, u)$\;
    $i := i+1$\;
    }
}
\For{$iter = 1 ... IterNum$}{
  Train $nn$ with $\langle X, Y \rangle$ based on Equation~\ref{fit}\;
}
}

\Return $nn$\;
\caption{RL-APNSP-ALGO}
\label{RL_ALGO}
\end{algorithm}

% \section{Improvement to Learned Model}
% \subsection{Fine-tuning for target graphs}

\section{Routing Policy Performance for Scalability and Zero-shot Generalization}

In this section, we discuss implementation of our machine learned routing policies and evaluate their performance in predicting all-pair near-shortest paths for graphs across different sizes and densities in Python3. We use PyTorch 2.0.1 \cite{pytorch} on the CUDA 11.8 compute platform to implement DNNs as shown in Figure~\ref{DNN}. Table~\ref{SimulationParams} shows our simulation parameters for training and testing the routing policies. 

\vspace*{-1mm}
\begin{table}[htbp]
\caption{Simulation Parameters}
\begin{center}
\begin{tabular}{c c c}
\hline
\textbf{Symbol}& \textbf{Meaning} & \textbf{Value} \\
\hline

$N_{train}$ & size of seed graph & 50 \\
$\rho_{train}$ & density of seed graph & 5 \\
$N_{test}$ & sizes of tested graphs & 27, 64, 125 \\
$\rho_{test}$ & densities of tested graphs & 2, 3, 4, 5 \\
$R$ & communication radius & 1000 \\
$\Omega=I+J$ & \# of input features & 2,4 \\
$K$ & \# of hidden layers & 2 \\
$N_{e}[]$ & \# of neurons in each hidden layer  & $[50\Omega, \Omega]$ \\
$\epsilon$ & margin for shortest paths prediction & 0.05 \\
$\phi$ & \# of origins for subsampling & 3 \\
$\gamma$ & discount factor & 1 \\
$IterNum_{S}$ & \# of iterations in supervised learning & 5000 \\ 
$IterNum_{RL}$ & \# of iterations in RL & 1000 \\ 
$EpiNum$ & \# of episodes in RL & 20 \\ 
\hline
\end{tabular}
\label{SimulationParams}
\end{center}
\vspace{-2mm}
\end{table}

%\vspace*{-1mm}
\subsection{Comparative Evaluation of Routing Policies}
We compare the performance of the routing policies obtained using the following approaches:
\begin{itemize}
\item {\bf Supervised ($\phi$=3)}:  Supervised learning  from appropriately chosen seed graph $G^{*}$ using graph subsamples selection with $\phi$=3 (see Section V.C).
\item {\bf Supervised (all)}: Supervised learning  from appropriately chosen seed graph $G^{*}$ with samples generated from all nodes.
\item {\bf RL ($\phi$=3)}: RL from appropriately chosen seed graph $G^{*}$ using graph subsamples selection with $\phi$=3.
\item {\bf RL (all)}: RL from appropriately chosen seed graph $G^{*}$ with samples generated from all nodes.
\item {\bf GF}: Greedy forwarding that forwards packets to the one-hop neighbor with the minimum Euclidean distance to the destination. 
\end{itemize}
Note that for a given set of input features, both supervised learning and reinforcement learning schemes use the same DNN configuration to learn the routing policies. 
By using the subsampling mechanism, not only the sample complexity but also the training time will be significantly reduced in  Supervised ($\phi$=3) and  RL ($\phi$=3) compared to those in Supervised (all) and  RL (all), respectively.

% \begin{figure*}[hbt!]
%     \centering
%     \subfigure[Size 27]
%     {
%         \includegraphics[width=0.315\textwidth]{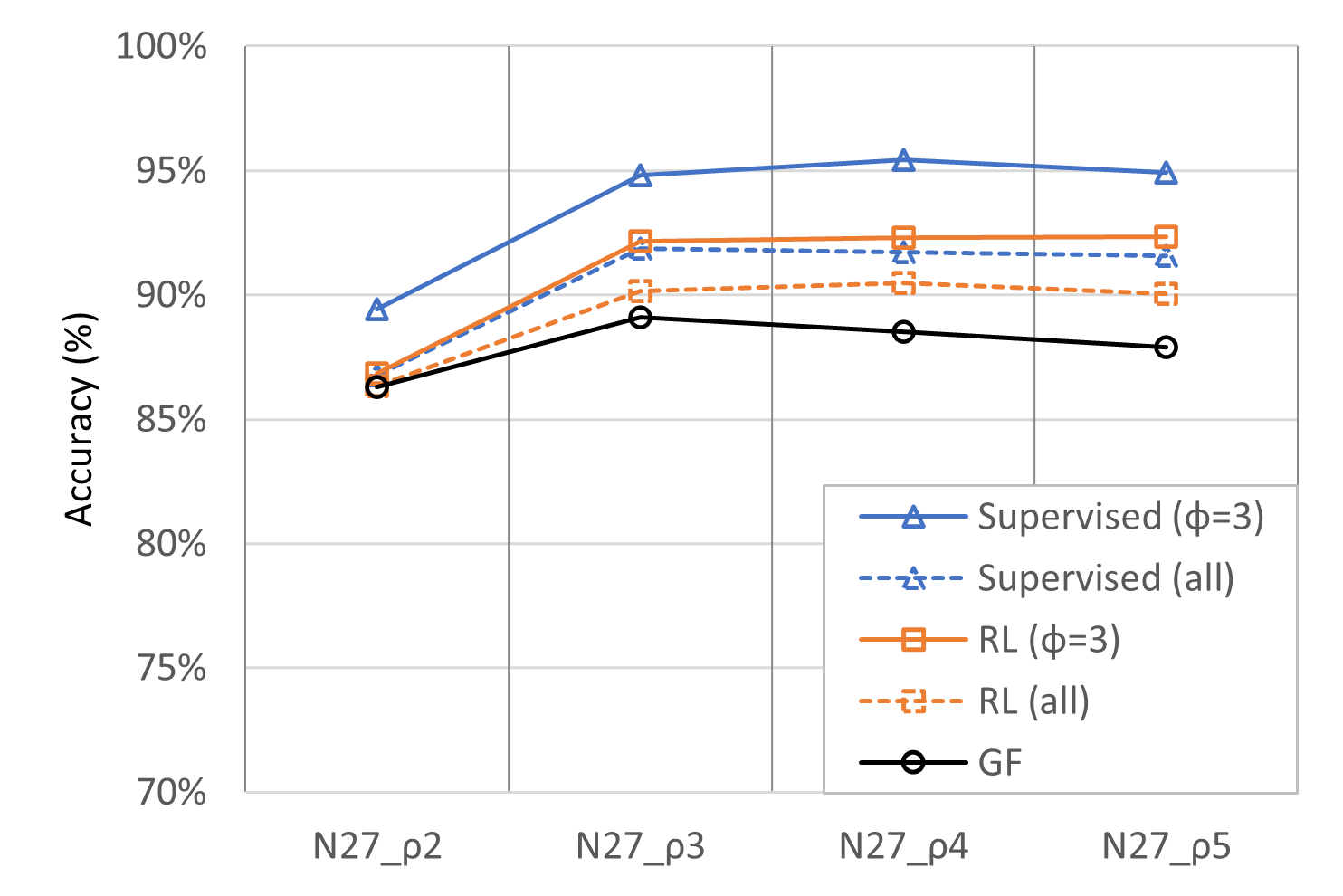}
%         \label{testing_size27}
%     }
%     \subfigure[Size 64]
%     {
%         \includegraphics[width=0.315\textwidth]{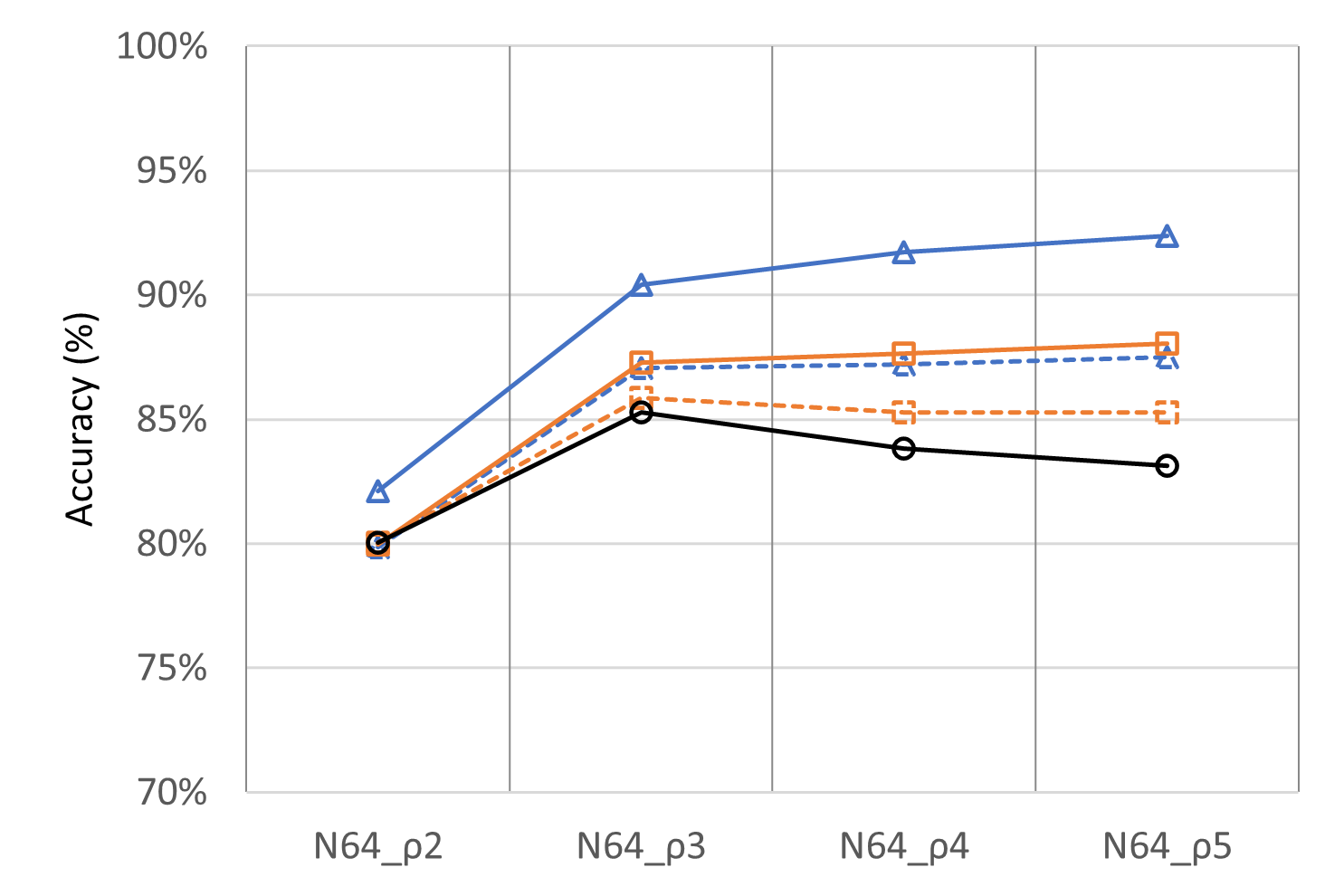}
%         \label{testing_size64}
%     }
%     \subfigure[Size 125]
%     {
%         \includegraphics[width=0.315\textwidth]{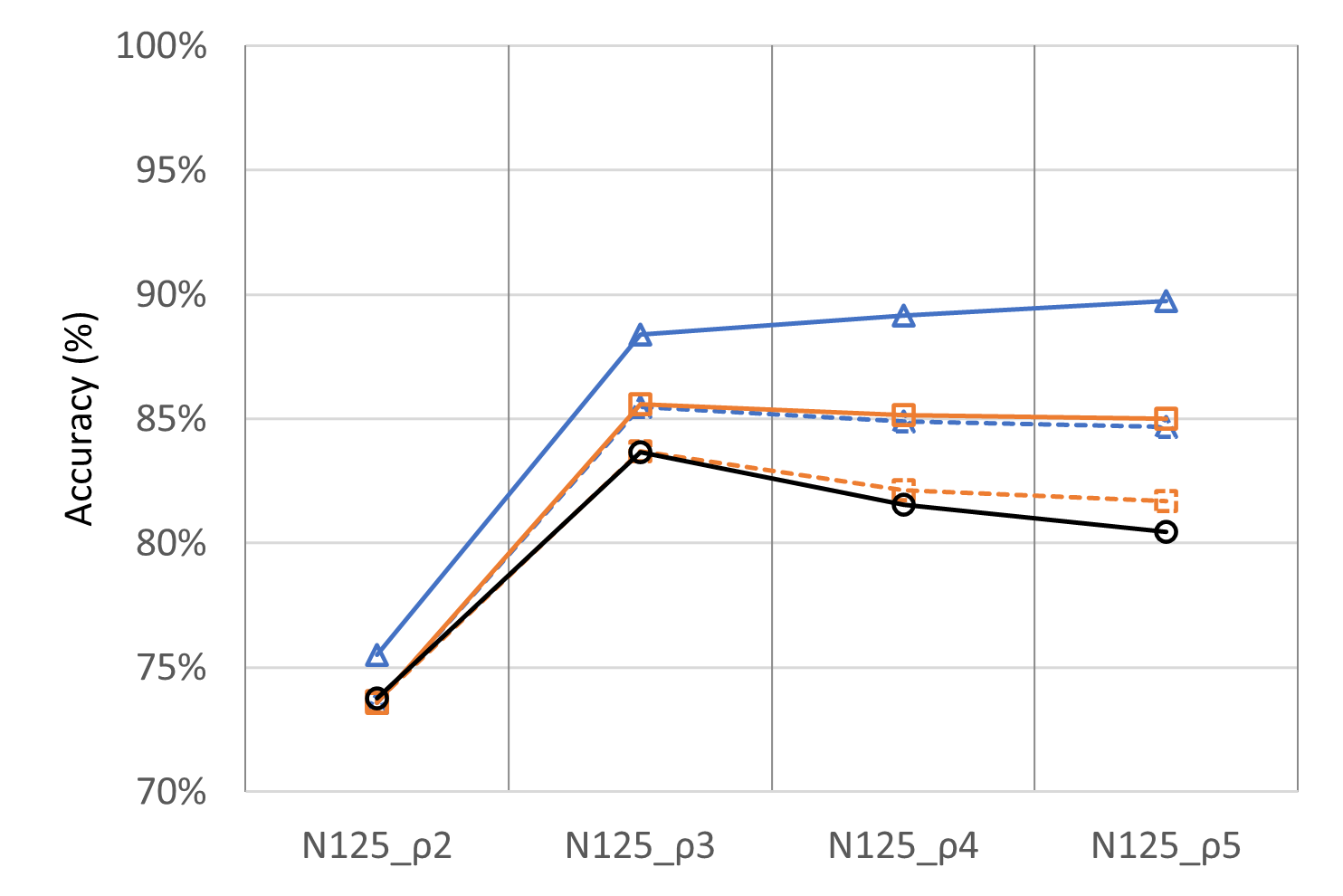}
%         \label{testing_size125}
%     }
%     \caption{APNSP prediction accuracy (\%) on tested graphs of various sizes and densities for the routing policies based on both Euclidean distance and stretch factor}
%     \label{Testing_Performance_ed_sf}
% \end{figure*}

\begin{figure*}[hbt!]
    \centering
    \vspace{-2mm}
    \subfigure[Density 2]
    {
        \includegraphics[width=0.23\textwidth]{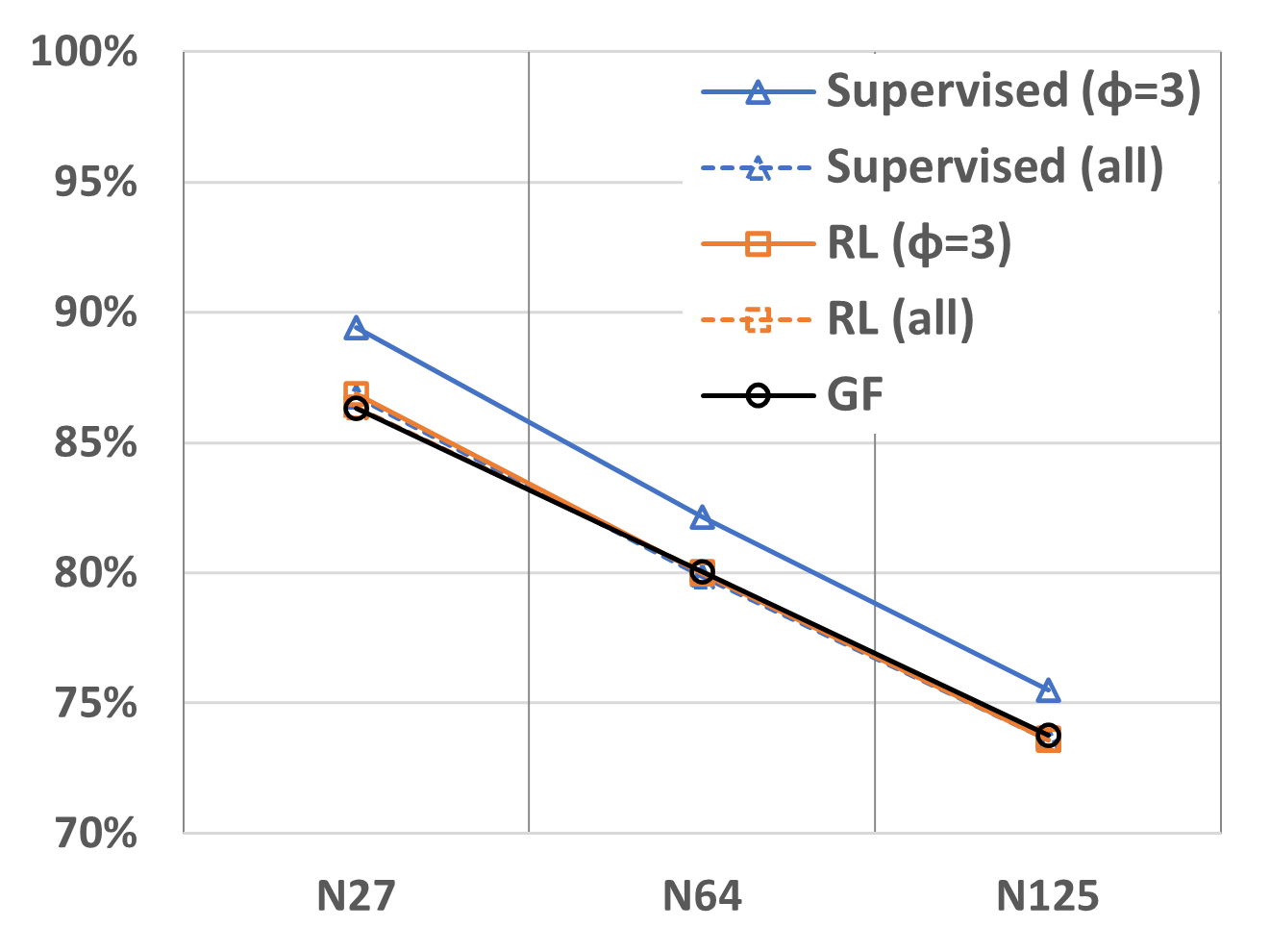}
        \label{testing_density2}
    }
    \subfigure[Density 3]
    {
        \includegraphics[width=0.23\textwidth]{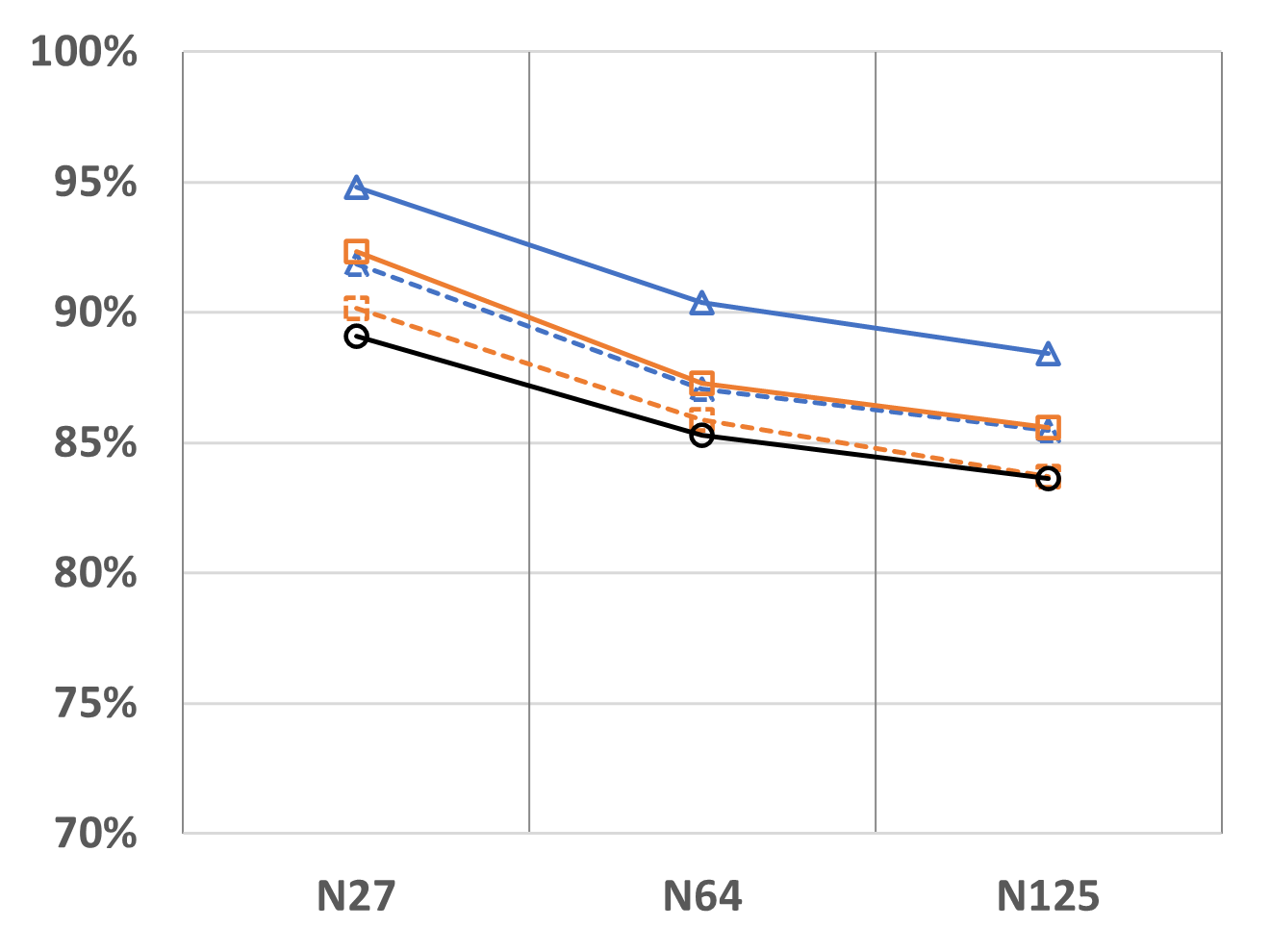}
        \label{testing_density3}
    }
    \subfigure[Density 4]
    {
        \includegraphics[width=0.23\textwidth]{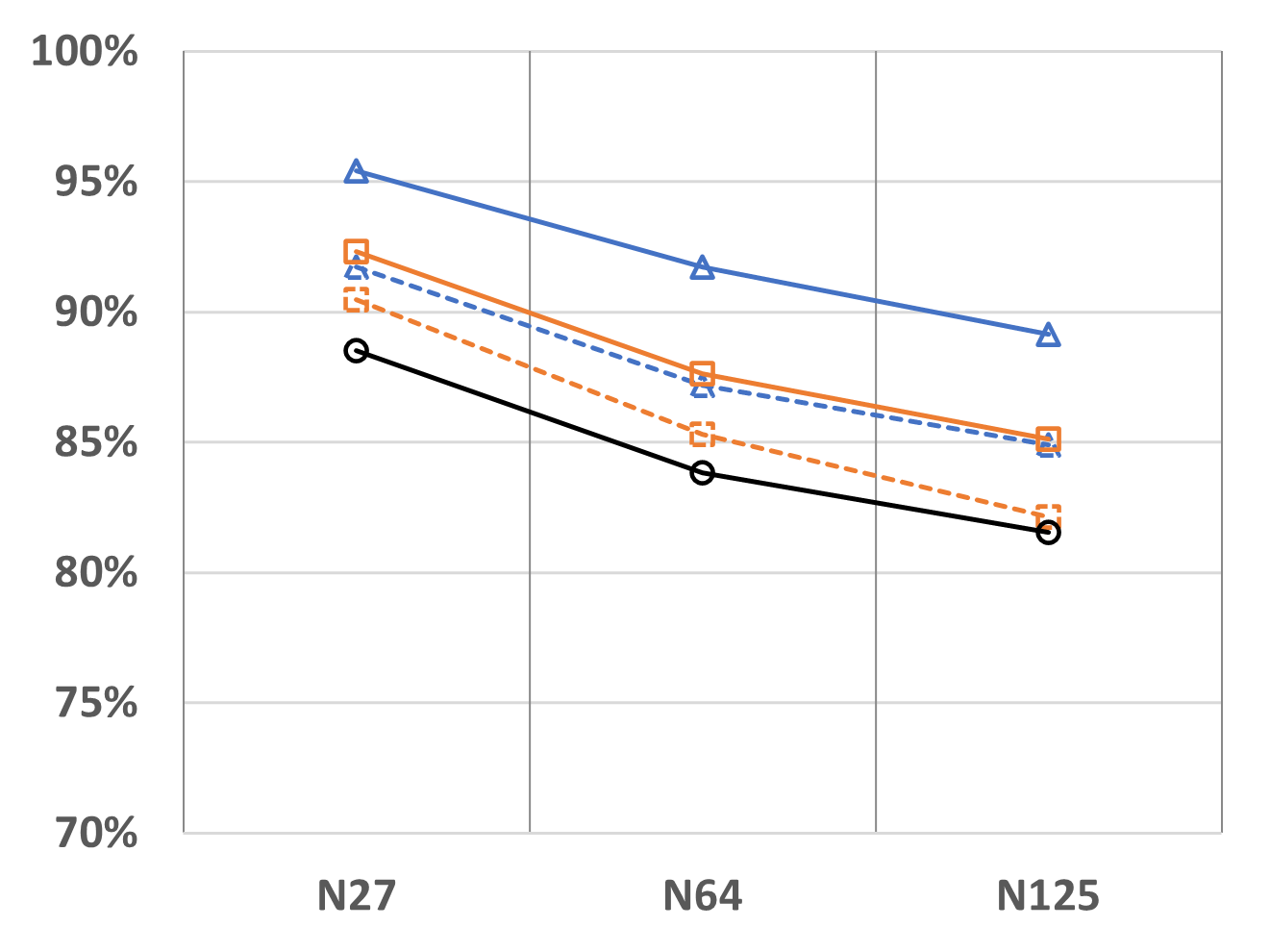}
        \label{testing_density4}
    }
    \subfigure[Density 5]
    {
        \includegraphics[width=0.23\textwidth]{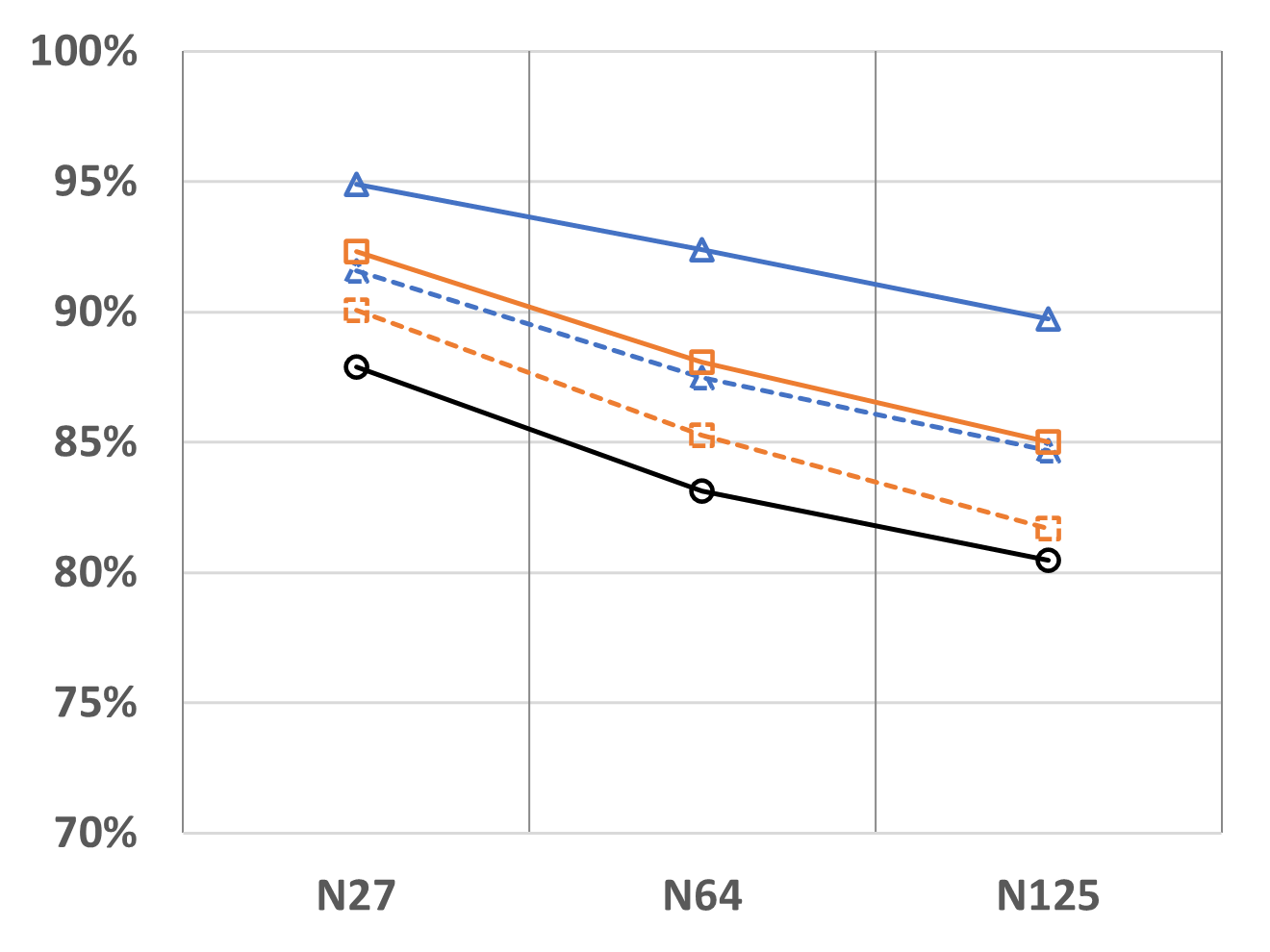}
        \label{testing_density5}
    }
    \caption{APNSP prediction accuracy on tested graphs of various sizes and densities for the routing policies based on both Euclidean distance and stretch factor.}
    \label{Testing_Performance_ed_sf}
    \vspace{-4mm}
\end{figure*}

\subsection{Testing Performance of Routing Policies for $G^*$}
Using the seed graph selection procedure discussed in Section V.C, we choose a seed graph $G^*$, with size 50, density 5, and $SIM_{G}(m, Q^{*})=0.943$.
%, from a set of uniform random graphs $G$ 
%of modest size and high density 
%that have high $SIM_{G}(m, Q^{*})$. 
The testing performance of the learned routing policies on $G^{*}$ with the DNNs under the two input feature sets is shown respectively in Tables~\ref{Training_Performance_ed} and \ref{Training_Performance_ed_sf}, where the prediction accuracy is computed based on Equation~\ref{APNSP_Accuracy} for all pairs of $(O, D), O, D \in V$.
\vspace*{-2mm}
\begin{table}[htbp]
\caption{APNSP prediction accuracy (\%) on G* for the routing policies considering only Euclidean distance}
\begin{center}
\begin{tabular}{|c|c|c|c|c|}
\hline
Supervised($\phi$=3) & Supervised(all) & RL($\phi$=3) & RL(all) & GF \\
\hline
84.00\% & 84.00\% & 84.00\% & 84.00\% & 84.00\% \\
\hline
\end{tabular}
\label{Training_Performance_ed}
\end{center}
\vspace{-2mm}
\end{table}

%\noindent
%\textbf{Using Euclidean distance as the input feature in DNNs.} 
Table~\ref{Training_Performance_ed} shows that the DNNs trained using our proposed machine learning approaches with the input features of $\overline{v D}$ and $\overline{u D}$ exactly match the performance of GF on $G^{*}$. The result implies that greedy forwarding 
%(or a linear function $m(f_{s}(v), f_{a}(u)) = -\overline{u D}$) 
is learnable by DNNs with Euclidean distance based input features. Individual experiments over  hundreds of graphs with various sizes and densities show that both Supervised ($\phi$=3) and RL ($\phi$=3) learn a routing policy whose performance matches that of GF for all of those graphs. We also observe that as the graph size increases or the density decreases, training the DNNs with the samples from all nodes tends to yield a routing policy whose accuracy is lower than that of greedy forwarding, presumably because including samples from nodes whose $SIM_{v}(m, Q^{*})$ is low hurts generalizability.
\vspace*{-2mm}
\begin{table}[htbp]
\caption{APNSP prediction accuracy (\%) on G* for the routing policies considering both Euclidean distance and stretch factor}
\begin{center}
\begin{tabular}{|c|c|c|c|c|}
\hline
Supervised($\phi$=3) & Supervised(all) & RL($\phi$=3) & RL(all) & GF \\
\hline
93.18\% & 88.12\% & 89.14\% & 87.14\% & 84.00\% \\
\hline
\end{tabular}
\label{Training_Performance_ed_sf}
\end{center}
\vspace{-2mm}
\end{table}
%\noindent
%\textbf{Using Euclidean distance and stretch factor as the input feature in DNNs.} 

For the same seed graph $G^{*}$, we train the DNNs with the input features of both Euclidean distance and stretch factor using our proposed approaches, and show that their testing performance on $G^*$ is better than that of GF as demonstrated in Table~\ref{Training_Performance_ed_sf}. Notably, the {\em DNNs trained with samples from the chosen nodes achieve higher prediction accuracy compared to those trained with samples from all nodes}. In the comparison between the supervised learning and the RL schemes, Supervised ($\phi$=3) (respectively, Supervised (all)) shows better performance than RL ($\phi$=3) (respectively, RL (all)), which is expected given the knowledge of optimal $Q$-values in the supervised learning schemes. Note that both RL schemes substantially improve over GF even without knowing the optimal $Q$-values, corroborating the benefits of incorporating domain knowledge in machine learned routing.

Regarding the seed graph selection, we also conduct experiments with a variety of seed graphs for both input feature sets. For seed graphs $G^*$ with high $SIM_{G}(m, Q^{*})$, training with graphs of small size (e.g., of size in 10..50) and high density can achieve higher performance compared to training with large sized graphs in both Superivsed and RL schemes.
 We also observed that competitive performance can be achieved by learning not only from seed graphs with high $SIM_{G}(m, Q^{*})$ but also from seed graphs with modest $SIM_{G}(m, Q^{*})$, given that the training samples are selected from nodes $v$ with high $SIM_{v}(m, Q^{*})$. These results again suggest  avoiding samples that harm generalizability.

%, they achieve high generalization performance by training with a seed graph with a small size (e.g., [10, 50]) and a high density (e.g., $\geq$ 4). However, our heuristic for graph subsamples selection loosens the limit of graph size in seed graph selection. This is because the subsampling scheme only considers nodes in the shortest path with high path stretch and, thus, with high probability, avoiding using samples that harm the generalizability. 

\subsection{Zero-shot Generalization over Diverse Graphs}

% \noindent
% \textbf{Using Euclidean distance as the input feature in DNNs.}

% \noindent
% \textbf{Using Euclidean distance and stretch factor as the input feature in DNNs.}

To evaluate the scalability and generalizability of the routing policies, we directly (i.e., without any adaptation) test the policies learned from the seed graph $G^*$ on new unit-disk uniform random graphs with different combinations of $(N_{test}, \rho_{test})$. We select 20 random graphs for each pair and calculate the average prediction accuracy over these $20{N_{test}}^2$ shortest paths. 

For the DNNs with input  $\overline{v D}$ and $\overline{u D}$, the tests confirm that the performance of all the learned policies match the prediction accuracy of GF.

For the DNNs with input $\langle \overline{v D}, \frac{\overline{O v}+\overline{v D}}{\overline{O D}}, \overline{u D}, \frac{\overline{O u}+\overline{u D}}{\overline{O D}}\rangle$, we plot in Figure~\ref{Testing_Performance_ed_sf} the respective  prediction accuracies across graphs with size in \{27, 64, 125\} and density in \{2, 3, 4, 5\}. The Supervised ($\phi$=3) approach achieves the best performance among all the approaches. In particular, compared to GF, the Supervised ($\phi$=3) policy improves the accuracy up to 10\% over GF, whereas the other learned policies show at least comparable performance in low density graphs ($\rho=2$) and achieve an improvement of up to 6\% in graphs with $\rho \geq 3$. The performance gap between the DNNs and GF increases as the network density increases to a high level (e.g., $\rho=5$), wherein GF was believed to work close to the optimal routing. 

The superior generalization performance of the routing policies learned by Supervised ($\phi$=3) validates that, learning from a single carefully chosen seed graph, with the knowledge of its shortest paths, is sufficient to obtain a generalized routing policy for direct applications in solving the APNSP problem in almost all random graphs.

\vspace*{2mm}
\section{Conclusions and Future Work}
We have shown that guiding machine learning with domain knowledge can lead to the rediscovery of well-known routing algorithms (sometimes surprisingly), in addition to new routing policies that perform well in terms of complexity, scalability, and generalizability. The little theory we have presented in the paper is readily extended to other classes of graphs (such as scale free graphs or non uniform cluster distributions), ranking metrics that are nonlinear, and MDP actions that span multiple neighbors. Thus, albeit our illustration intentionally uses relatively familiar input features and local routing architectures, richer domain theory will be useful to guide machine learning of novel routing algorithms. We also note that, while we have presented empirical results for networks up to size 125, our results have been tested on instances of larger networks of several hundred to a 1000 nodes. Moreover, the routing policies are likely to be competitive for richer classes of graphs than the set of unit-disk uniform random graphs on which we have focused our validation.

While samples from nodes in a single shortest path of a single seed graph suffice for generalizable learning, in practice, learning from multiple shortest paths in one or more seed graphs may be of interest.  For instance, if an ideal seed graph or shortest path is not known a priori, online learning from better or multiple candidate seed graphs and paths as they are encountered may be of interest for some applications. Along these lines, we recall that the set of ideal (and near ideal) seed graphs is relatively large in the problem we considered. One way to relax the knowledge of ideal seed graphs is to leverage online meta-learning, for learning a good model initialization and continuing to improve the initialization based on better seed graphs as they are encountered. Towards this end, we have also been studying the merits of efficiently fine tuning the model for the target graph as an alternative to zero-shot generalization.  

%Comments on learning from "algorithms"
%%%%%%%%%%%%%%%%%%%%%%%%%%%%%%%%%%%%%%%%%%%%%%%%%%%%%%%%%%%%%%%%%%%%%%%%%%%%%%%
\newpage
\bibliographystyle{IEEEtran}
\bibliography{myref}
\end{document}